\documentclass{article}
\usepackage{iclr2026_conference,times}


\usepackage{amsmath,amsfonts,bm}









\def\eqref#1{equation~\ref{#1}}









\def\1{\bm{1}}










\DeclareMathAlphabet{\mathsfit}{\encodingdefault}{\sfdefault}{m}{sl}
\SetMathAlphabet{\mathsfit}{bold}{\encodingdefault}{\sfdefault}{bx}{n}


\def\gN{{\mathcal{N}}}










\newcommand{\E}{\mathbb{E}}

\newcommand{\R}{\mathbb{R}}

\newcommand{\Cov}{\mathrm{Cov}}


\DeclareMathOperator*{\argmax}{arg\,max}
\DeclareMathOperator*{\argmin}{arg\,min}

\DeclareMathOperator{\Tr}{Tr}

\usepackage{algorithm}
\usepackage{algpseudocode}
\usepackage{hyperref}
\usepackage{url}
\usepackage{booktabs}
\usepackage{multirow}
\title{\emph{Flower}: A Flow-Matching Solver for Inverse Problems}

\iclrfinalcopy


\author{Mehrsa Pourya \quad Bassam El Rawas \quad Michael Unser \\
Biomedical Imaging Group, EPFL\\ Lausanne, Switzerland \\
\texttt{\{mehrsa.pourya, bassam.elrawas, michael.unser\}@epfl.ch}
}
%
\newcommand{\fixme}[1]{\textcolor{black}{#1}}

\usepackage{command_MU}

\definecolor{bestblue}{RGB}{220,235,255} 
\newcommand{\bestoverall}[1]{\textcolor{blue}{{#1}}}

\begin{document}

\maketitle

\begin{abstract}
We introduce \emph{Flower}, a solver for \fixme{linear} inverse problems. It leverages a pre-trained flow model to produce reconstructions that are consistent with the observed measurements. \emph{Flower} operates through an iterative procedure over three steps: (i) a flow-consistent destination estimation, where the velocity network predicts a denoised target; (ii) a refinement step that projects the estimated destination onto a feasible set defined by the forward operator; and (iii) a time-progression step that re-projects the refined destination along the flow trajectory.  We provide a theoretical analysis that demonstrates how \emph{Flower} approximates Bayesian posterior sampling, thereby unifying perspectives from plug-and-play methods and generative inverse solvers. On the practical side, \emph{Flower} achieves state-of-the-art reconstruction quality while using nearly identical hyperparameters across various \fixme{linear} inverse problems. Our code is available at \url{https://github.com/mehrsapo/Flower}.
\end{abstract}

\section{Introduction}
Inverse problems are central to computational imaging and computer vision \citep{MM2019, zeng2001image}. Their goal is to reconstruct an underlying signal $\M x \in \R^d$ from its observed measurements $\M y \in \R^M$. Here, we focus on linear inverse problems, such that the acquisition of the measurements follows the model
\begin{equation} \label{eq:forward_model}
    \M y = \M H \M x + \M n
\end{equation}
for some linear forward operator $\M H \colon \R^d \to \R^M$ and additive white Gaussian noise $\M n \sim \mathcal{N}(\M 0, \sigma_n^2 \M I)$. From a Bayesian perspective, the simplest reconstruction approach is to obtain the maximum-likelihood estimation  
\begin{equation} \label{eq:mle_estimate}
    \hat{\M x}_{\mathrm{MLE}} = \argmax_{\M x \in \R^d} p_{\M Y  \mid  \M X = \M x}(\M y) = \argmin_{\M x \in \R^d} \frac{1}{2\sigma_n^2}\norm{\M H \M x - \M y}_2^2.
\end{equation}
However, this problem is ill-posed and yields poor-quality solutions. Another approach is to obtain the maximum a posteriori estimation (MAP) 
\begin{equation} \label{eq:map_estimate}
    \hat{\M x}_{\mathrm{MAP}} = \argmax_{\M x \in \R^d} p_{\M X \vert \M Y =  \M y}(\M x) = \argmin_{\M x \in \R^d} \left(\frac{1}{2 \sigma_n^2}\norm{\M H \M x - \M y}_2^2 - \log p_{\M X}(\M x)\right), 
\end{equation}
which requires the knowledge of the prior distribution $p_{\M X}$ of images, a quantity that is generally unknown. The minimization problem of \eqref{eq:map_estimate} is consistent with the variational perspective of inverse problems, where the term $\left(- \log p_{\M X}\left(\M x\right)\right)$ is replaced by a regularizer $\mathcal{R}(\M x)$ that encodes some properties of the images. From classic signal processing to the advent of deep learning, the design of a good regularizer $\mathcal{R}$ has been of interest. Classic signal processing relies on the smoothness or sparsity of images to introduce wavelet- or total-variation-based regularizers \citep{rudin1992nonlinear, 1217267, beck2009fast}. Some methods build upon classical models and try to learn such criteria in a data-driven manner \citep{RotBla2009, GouNeuUns2023, ducotterd25a, pourya2025dealing}. Plug-and-play (PnP) approaches focus on the implicit replacement of $\mathcal{R}$ by its proximal operator, with a learned neural network that serves as a denoiser \citep{venkatakrishnan2013plug, Drunet2022, hurault2022gradient, HurLec2022}. Although MAP estimations tend to have a good reconstruction quality, they do not necessarily provide the minimum-mean-square estimator $\hat{\M x}_{\mathrm{MMSE}}$ that is best in terms of the peak signal-to-noise ratio (PSNR). To estimate $\hat{\M x}_{\mathrm{MMSE}}$, one would have to compute the posterior mean $\hat{\M x}_{\mathrm{MMSE}} = \mathbb{E}[{\M X \vert \M Y = \M y}]$. Moreover, for perceptual metrics, it is better to generate a sample from $p_{\M X \vert \M Y = \M y}$ instead of an estimator of the distribution.  

The objective of generative modeling is to sample from a target distribution $p_{\M X}$. In practice, this distribution is unknown, and one typically only has access to a finite collection of its samples. Numerous approaches have been proposed to address this issue. Among them, diffusion models \citep{sohl2015deep, ho2020denoising, songscore} and, more recently, flow-matching methods \citep{lipman2023flow} represent the state of the art in scalable generative modeling for images.  

Flow matching, introduced by \cite{lipman2023flow}, constructs a continuous-time generative process by parameterizing the velocity field of an ordinary differential equation (ODE) as a neural network. It takes inspiration from optimal transport and continuous normalizing flows \citep{ambrosio2008, HHS2022} and transports an initial source distribution $p_{\M X_0}$ to a target distribution $p_{\M X_1} \approx p_{\M X} $ . The choice of probability paths from $p_{\M X_0}$ to $p_{\M X_1}$ are numerous, with Gaussian paths recovering diffusion as a special case \citep{albergo2023building}. However, flow matching mostly focuses on straight-line paths, which yields competitive performance and improved sampling efficiency \citep{liu2023flow, liu2022Ot}.  

The remarkable success of generative models in image generation motivates their extension to inverse problems, where the goal shifts from the sampling of the prior distribution \fixme{$p_{\mathbf{X}_1}$} to the sampling of the posterior $p_{\mathbf{X}_1 \mid \mathbf{Y} = \mathbf{y}}$. Several inverse solvers based on diffusion models have been introduced \citep{chung2023diffusion, Chung2023, kawar2022denoising, song2023pseudoinverseguided, zhu2023denoising, Zhang2025Improving, mardani2024variational}. Recent efforts also focus on flow-based solvers \citep{Pokle2023, martin2025pnpflow}. Existing approaches can be broadly grouped into two categories: (i) methods that approximate the posterior score (velocity field) with gradient corrections along the generative path; and (ii) PnP strategies that alternate between generative (diffusion or flow) updates and data-consistency steps. 

In this work, we introduce a novel solver based on flow matching that achieves state-of-the-art results among flow-based methods for linear inverse problems. Our approach departs from existing methods by framing the problem through a Bayesian ancestral-sampling perspective, which gives rise to a simple three-step procedure with a natural plug-and-play interpretation. Our main contributions are as follows.


\begin{enumerate}
    \item \textbf{Flow-matching solver for inverse problems.} We introduce \emph{Flower}, an inverse problem solver that consists of three steps: (1) a \emph{flow-consistent destination estimation}, where the velocity network is used to predict a destination, interpretable as denoising; (2) a \emph{measurement-aware refinement}, in which the estimated destination is projected onto the feasible set defined by the forward operator; and (3) a \emph{time progression}, where the refined destination is re-projected along the flow path.  
    
    \item \textbf{Bayesian analysis and relation to PnP.} We provide a Bayesian analysis in which we demonstrate how and under what considerations \emph{Flower} generates \fixme{approximate} posterior samples from $p_{\M X_1 \vert \M Y = \M y}$. Specifically, we show that Step 1 computes the conditional expectation $\mathbb{E}[\M X_1 \vert \M X_t = \M x_t]$, which we then use for \fixme{the approximation $\tilde{p}_{\M X_1 \vert \M X_t = \M x_t}$ of $p_{\M X_1 \vert \M X_t = \M x_t}$}. Through this approximation, we show that Step 2 generates a sample $\tilde{\M x}(\M x_t, \M y) \sim \tilde{p}_{\M X_1 \vert \M X_t = \M x_t, \M Y= \M y}$. Step 3 then updates the trajectory given the refined destination $\tilde{\M x}(\M x_t, \M y)$ and draws a sample $\M x_{t+\Delta t} \sim \tilde{p}_{\M X{t+\Delta t} \mid \M X_t = \M x_t, \M Y = \M y}$, which by induction and ancestral sampling, produces the final sample $\M x_1 \sim \tilde{p}_{\M X_1 \mid \M Y = \M y}$. These steps rely on three assumptions: the velocity network is optimally trained for unconditional flow matching; the acquisition of measurements follows the forward model in \eqref{eq:forward_model}; and the source and target distributions are independent. To the best of our knowledge, this Bayesian construction is novel within flow-based solvers for inverse problems. Although this construction is key to the derivation of our solver, the resulting procedure closely mirrors the PnP methods. \fixme{Thus, our Bayesian justification establishes a link between the PnP approach and approximate posterior sampling with generative models for linear inverse problems. We also discuss a possible extension of \emph{Flower} to nonlinear inverse problems.}


    \item \textbf{Numerical validation.} We first examine a controlled setup with Gaussian mixture models and show that \emph{Flower} successfully recovers posterior samples. We then evaluate our method on standard inverse problem benchmarks for flow matching. We achieve competitive performance, with nearly identical hyperparameters across all tasks.
\end{enumerate}

The remainder of this paper is organized as follows. In Section~\ref{sec:back}, we review the fundamentals of flow matching along with the mathematical tools required for the development of our method. We then introduce \emph{Flower} in Section~\ref{sec:main} and present the associated theoretical analysis. In Section~\ref{sec:related}, we discuss related work and highlight their similarities and differences with our approach. We report our numerical results in Section~\ref{sec:exps}. \fixme{Finally, we provide a potential nonlinear extension of \emph{Flower} in Section \ref{sec:nonlinextension}.}

\section{Background \label{sec:back}}
\subsection{Flow Matching}
Let $p_{\M X_0}$ be a source distribution that is easy to sample and let $p_{\M X_1}$ be a target distribution that we want to sample from. A time-dependent flow $\psi_t$ transports $p_{\M X_0}$ to $p_{\M X_1}$ via the ODE
\begin{equation} 
    \frac{\mathrm{d}\psi_t(\M x)}{\mathrm{d}t} \;=\; \M v_t\!\big(\psi_t(\M x)\big), \qquad t \in [0,1],
\end{equation}
for some velocity field $\M v_t : \R^d \to \R^d$. The intermediate variables $\M X_t = \psi_t(\M X_0)$ follow a distribution $p_{\M X_t}$. The objective of flow matching is to approximate $\M v_t$ with a neural network $\M v^\theta_t$, which will allow us to sample from $p_{\M X_1}$. However, the determination of the flow-matching loss
\begin{equation} 
\label{eq:fm}
    \mathcal{L}_{\mathrm{FM}}(\theta)
    =
    \E_{t \sim \mathcal{U}[0,1]}\, \E_{\M x_t \sim p_{\M X_t}} \left[ \left\| \M v_t^{\theta}(\M x_t) - \M v_t(\M x_t) \right\|_2^2 \right]
\end{equation}
is challenging, as it requires access to the marginal velocity field $\M v_t(\M x_t)$. To address this, we focus on the conditional velocity $\M v_t(\M x_t \mid \M x_1)$ and define the conditional straight-line flow and velocity
\begin{equation}
    \M x_t = \psi_t(\M x_0 \mid \M x_1) = (1-t)\,\M x_0 + t\,\M x_1, \quad  \M v_t(\M x_t \mid \M x_1) = \M x_1 - \M x_0.
\end{equation}
This leads to the practical conditional flow-matching loss
\begin{equation}
\label{eq:cfm}
    \mathcal{L}_{\mathrm{CFM}}(\theta)
    =
    \E_{t \sim \mathcal{U}[0,1]}\, \E_{(\M x_0, \M x_1) \sim \pi} \left[ \left\| \M v_t^\theta\big((1-t)\,\M x_0 + t\,\M x_1, t\big) - (\M x_1 - \M x_0) \right\|_2^2 \right],
\end{equation}
where $\pi \in \Pi(p_{\M X_0}, p_{\M X_1})$ is a coupling over $(\M X_0, \M X_1)$, given by joint distributions on $\R^d \times \R^d$ with marginals $p_{\M X_0}$ and $p_{\M X_1}$. \cite{lipman2023flow} have shown that the minimization of $\mathcal{L}_{\mathrm{CFM}}$ is equivalent to the minimization of $\mathcal{L}_{\mathrm{FM}}$, since their gradients with respect to $\theta$ are equal.

The coupling $\pi$ determines how $(\M x_0, \M x_1)$ are paired. With the \emph{independent} (IND) coupling $\pi = p_{\M X_0} \otimes p_{\M X_1}$, the training is simple and scalable. However, the resulting interpolated paths can overlap, which may slow down convergence. At the other extreme, the \emph{optimal transport} (OT) coupling $\pi^\star \in \arg\min_{\pi \in \Pi(p_{\M X_0}, p_{\M X_1})} \E_{(\M x_0, \M x_1) \sim \pi} \left[ \| \M x_1 - \M x_0 \|_2^2 \right]$ produces globally aligned pairs such that straight-line flows approximate displacement interpolation along the Wasserstein-2 geodesic. If the Monge map $T$ satisfying $T_\# p_{\M X_0} = p_{\M X_1}$ exists and is known, then no training is needed: the sampling $\M x_0 \sim p_{\M X_0}$ and the computation of $\M x_1 = T(\M x_0)$ already generate a sample from $p_{\M X_1}$. In practice, $T$ is unknown and its approximation is infeasible. A practical compromise is \emph{mini-batch OT}, which solves an entropically regularized OT problem within each batch to compute an approximate coupling $\hat{\pi}$. This improves alignment over independence with moderate computational overhead. For more details on the mathematical background of OT, such as the definition and uniqueness of the Monge map, we refer to \cite{Peyre2025OT}.

The choice of the source distribution $p_{\M X_0}$ is crucial for effective training and sampling. In practice, $p_{\M X_0}$ is often chosen as the standard normal distribution $\mathcal{N}(\M 0, \M I)$. With the independent coupling, this simply yields $p_{\M X_t \vert \M X_1= \M x_1} = \mathcal{N}(t \M x_1, (1-t)^2\M I)$. However, the computation of $p_{\M X_t \vert \M X_1= \M x_1}$ in the OT case is challenging due to the mini-batch approach, in which it is difficult to determine the batch a sample $\M x_1$ came from, as well as its associated OT paths.

\subsection{Proximal Operator}
The \emph{proximal operator} of a proper, lower semi-continuous convex function \( f \colon \R^d \to \R \cup \{+\infty\} \) is defined as
\begin{equation}
    \label{eq:prox_f}
    \mathrm{prox}_{f}(\M x) = \argmin_{\M w \in \R^d} \left( \frac{1}{2} \norm{\M w - \M x}_2^2 + f(\M w) \right).
\end{equation}
This operator can be interpreted as a generalized projection of $\M x$ onto a set associated with $f$, balancing proximity to $\M x$ and regularization by $f$. Proximal operators play a central role in optimization algorithms that solve inverse problems and are key components of proximal-gradient methods \citep{10.1561/2200000050}. 

\begin{figure} 
    \centering
    \includegraphics[width=1\linewidth]{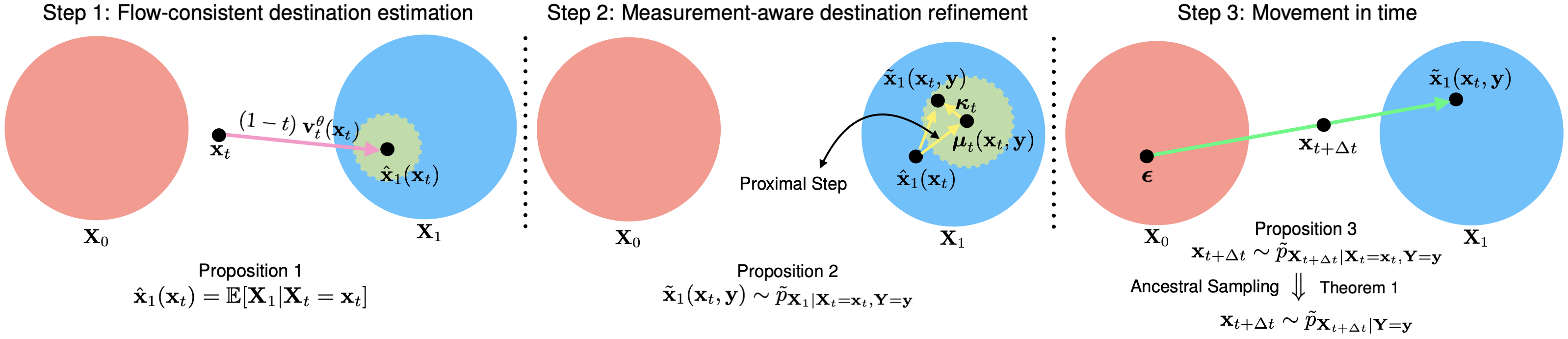}
    \caption{Overview of the three steps in \emph{Flower}. Starting from an initial sample 
$\mathbf{x}_0 \sim p_{\mathbf{X}_0}$ at time $t$, the method:
Step 1 predicts a flow-consistent destination $\hat{\mathbf{x}}_1(\mathbf{x}_t)$;
Step 2 refines this destination using the measurements via a proximal step and associated 
uncertainty sampling to obtain $\tilde{\mathbf{x}}_1(\mathbf{x}_t, \mathbf{y})$; and 
Step 3 updates the trajectory along time by interpolating $\tilde{\mathbf{x}}_1(\mathbf{x}_t, \mathbf{y})$ 
with new noise $\V \epsilon \sim p_{\mathbf{X}_0}$. The $N$-time repetition of these steps yields 
the final reconstruction $\mathbf{x}_1$.} \label{fig:flower_steps}
\end{figure}

\section{Method \label{sec:main}}  
Let $\M v_t^{\theta}$ denote a velocity network trained to generate samples from $p_{\M X_1}$ through flow matching. Therefore, starting from $\M x_0 \sim p_{\M X_0}$, we get a sample $\M x_1 \sim p_{\M X_1}$ if we perform $N$ iterations of the update equation
\begin{equation} \label{eq:uncond_flow}
    \M x_{t+\Delta t} = \M x_t + \Delta t \  \M v_t^{\theta}(\M x_t)
\end{equation}
with $\Delta t = \frac{1}{N}$. 
We aim to use the pre-trained velocity network $\M v^{\theta}_t$ to generate solutions $\M x_1$ that are consistent with the flow and the linear forward model $\M y = \M H \M x_1 + \M n$ for $\M n \sim \mathcal{N}(\M 0, \sigma_n^2 \M I)$, as described in \eqref{eq:forward_model}. To achieve this goal, we introduce \emph{Flower} which, given the measurements $\M y$, modifies the unconditional flow path of \eqref{eq:uncond_flow} and outputs $\M x_1$ by iterating $N$ times over three steps. We first introduce these steps and then theoretically establish how and under what assumptions \emph{Flower} \fixme{approximates} a sample $\M x_1$ of the conditional posterior $p_{\M X_1 \mid \M Y = \M y}$. The three steps are as follows. 

\begin{enumerate}  
    \item \textbf{Flow-consistent destination estimation}
    \begin{equation}  \label{eq:det_est}
        \hat{\M x}_1(\M x_t) = \M x_t + (1 - t) \M v_t^{\theta} (\M x_t) .
    \end{equation}  

    \item \textbf{Measurement-aware destination refinement}  
    \begin{equation}  \label{eq:dest_ref}
        \tilde{\M x}_1(\M x_t, \M y) = {\V \mu}_t (\M x_t, \M y) + \gamma \V \kappa_t
    \end{equation} 
    for 
    \begin{equation}  \label{eq:mu_2}
        {\V \mu}_t(\M x_t, \M y) = \mathrm{prox}_{\nu_t^2 F_{\M y}} \left( \hat{\M x}_1\left(\M x_t\right)\right),  \quad  \V \kappa_t  \sim  \mathcal{N}(\M 0, \M \Sigma_t),
    \end{equation}  
    where $F_{\M y}(\M x) = \frac{1}{2 \sigma_n^2} \norm{\M H \M x - \M y}_2^2$ and $\mathrm{prox}_{\nu_t^2 F_{\M y}}$ denotes the proximal operator of $\nu_t^2 F_{\M y}$ as defined in \eqref{eq:prox_f}. We have that $\nu_t = \frac{(1-t)}{\sqrt{t^2 + (1-t)^2}}$ and that $\M \Sigma_t =\left(\nu_t^{-2} \M I + \sigma_n^{-2} \M H^{\top} \M H\right)^{-1} $. The hyperparameter $\gamma \in \{0, 1\}$ controls the consideration of the uncertainty of the destination refinement step. 
    
    \item \textbf{Movement in time} 
    \begin{equation}  \label{eq:time_move}
        \M x_{t+\Delta t}  = (1 - t - \Delta t) \V \epsilon + ( t + \Delta t)   \tilde{\M x}_1(\M x_t, \M y),
    \end{equation}  
    where $\V \epsilon$ is newly sampled from $p_{\M X_0}$ at each iteration.
\end{enumerate}  
Here, $\Delta t = \tfrac{1}{N}$ and the scheme is initialized with a sample $\M x_0  \sim p_{\M X_0}$. In Figure \ref{fig:flower_steps}, we present a visual illustration of these three steps. We also summarize these steps in Algorithm \ref{alg:flower} of the Appendix.

We now interpret \emph{Flower} through a Bayesian lens. We assume that, at each iteration, the three steps collectively draw $\M x_{t+\Delta t}$ from the transition distribution $p_{\M X_{t+\Delta t}\mid \M X_t=\M x_t,\M Y=\M y}$. Under this assumption and with proper initialization, the procedure performs ancestral sampling along the conditional trajectory. By induction, we obtain $\M x_{t+\Delta t}\sim p_{\M X_{t+\Delta t}\mid \M Y=\M y}$. We formalize this in Theorem~\ref{theorem:sampling}, with proof in Appendix \ref{app:proof_the}, which in turn implies that the final sample $\M x_1$ produced by \emph{Flower} follows the desired posterior $p_{\M X_1\mid \M Y=\M y}$. We then detail how, in practice, the three steps realize a draw from $\tilde{p}_{\M X_{t+\Delta t}\mid \M X_t=\M x_t,\M Y=\M y}$, which serves as an approximation \fixme{of} $p_{\M X_{t+\Delta t}\mid \M X_t=\M x_t,\M Y=\M y}$.

\begin{theorem} \label{theorem:sampling}
    Let $\M x_0$ be a sample from $p_{\M X_0 \mid \M Y = \M y}$. If $\M x_t$ is a sample from $p_{\M X_t \mid \M Y = \M y}$, then the sample $\M x_{t+ \Delta t}$ from $ p_{\M X_{t+\Delta t} \mid \M X_t = \M x_t, \M Y = \M y}$ follows $p_{\M X_{t + \Delta t} \mid \M Y = \M y}$. 
\end{theorem}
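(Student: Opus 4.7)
The plan is to recognize this as the standard ancestral sampling identity and establish it via marginalization with the chain rule of conditional probability. Concretely, I would show that drawing $\M x_t \sim p_{\M X_t \mid \M Y=\M y}$ followed by $\M x_{t+\Delta t} \sim p_{\M X_{t+\Delta t} \mid \M X_t = \M x_t,\, \M Y=\M y}$ produces a pair distributed according to the joint $p_{\M X_t,\M X_{t+\Delta t} \mid \M Y=\M y}$, and that the marginal of $\M x_{t+\Delta t}$ under this joint is precisely $p_{\M X_{t+\Delta t} \mid \M Y=\M y}$.

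First I would write the joint distribution factorization
\begin{equation}
p_{\M X_t,\M X_{t+\Delta t} \mid \M Y=\M y}(\M x_t,\M x_{t+\Delta t}) = p_{\M X_{t+\Delta t} \mid \M X_t=\M x_t,\M Y=\M y}(\M x_{t+\Delta t})\, p_{\M X_t \mid \M Y=\M y}(\M x_t),
\end{equation}
which is just the chain rule applied conditionally on $\M Y = \M y$. Then I would integrate over $\M x_t$ and use the law of total probability
\begin{equation}
\int_{\R^d} p_{\M X_{t+\Delta t} \mid \M X_t=\M x_t,\M Y=\M y}(\M x_{t+\Delta t})\, p_{\M X_t \mid \M Y=\M y}(\M x_t)\, \mathrm{d}\M x_t = p_{\M X_{t+\Delta t} \mid \M Y=\M y}(\M x_{t+\Delta t}),
\end{equation}
to conclude that the density of $\M x_{t+\Delta t}$ generated by the two-stage sampler is exactly $p_{\M X_{t+\Delta t} \mid \M Y=\M y}$, as claimed. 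The base case of the induction is directly provided by the hypothesis $\M x_0 \sim p_{\M X_0 \mid \M Y=\M y}$.

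There is no real technical obstacle here, as the statement is a clean instance of ancestral sampling. The only subtlety worth flagging explicitly is that the factorization above requires $\M X_t$ and $\M Y$ to be a well-defined pair of random variables with a joint density, which is ensured by the generative flow model together with the forward observation model \eqref{eq:forward_model}. I would state this as a preliminary remark so that it is clear the proof treats $p_{\M X_t,\M Y}$ as a bona fide joint distribution, and then the marginalization argument goes through verbatim. The heavier lifting—namely, showing that Steps 1--3 of \emph{Flower} indeed implement a draw from (an approximation of) $p_{\M X_{t+\Delta t} \mid \M X_t=\M x_t,\M Y=\M y}$—is deferred to the subsequent analysis and is not part of this theorem.
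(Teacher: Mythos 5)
Your proposal is correct and follows essentially the same route as the paper's proof: both factor the joint $p_{\M X_t,\M X_{t+\Delta t}\mid \M Y=\M y}$ via the conditional chain rule and marginalize over $\M x_t$, which is exactly the two-variable instance of ancestral sampling the paper invokes. The extra remark on the existence of the joint density is a reasonable (if minor) addition not spelled out in the paper.
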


\begin{remark}
For the inductive argument to hold, \emph{Flower} must be initialized with a sample from the conditional distribution $p_{\M X_0 \mid \M Y=\M y}$. When $\M X_0$ and $\M X_1$ are assumed to be independent, this reduces to a sampling from the unconditional prior $p_{\M X_0}$, which is often chosen as $\mathcal{N}(\M 0, \M I)$.
\end{remark}

Theorem~\ref{theorem:sampling} presupposes the existence of an ancestral-sampling scheme to generate samples from $p_{\M X_{t+\Delta t} \mid \M Y=\M y}$. This scheme requires a sampling from the transition distribution $p_{\M X_{t+\Delta t} \mid \M X_t=\M x_t, \M Y=\M y}$. We now describe how to realize this transition in practice. We proceed sequentially and explain the details of each step of \emph{Flower}.

First, under the assumption that $\M v_t^{\theta}$ is the optimal velocity network, we show in Proposition~\ref{prop:cond_mean} that the predicted $\hat{\M x}_1(\M x_t)$ in Step~1 equals the conditional expectation $\mathbb{E}[\M X_1 \mid \M X_t=\M x_t]$. The proof is provided in Appendix~\ref{app:proof_prop_1}.

\begin{proposition} \label{prop:cond_mean}
    If $\M v^{\theta} (\M x_t, t) = \M v^*_t(\M x)$ is a pre-trained velocity vector field that minimizes the conditional flow-matching loss, then
   \begin{equation}
       \hat{\M x}_1(\M x_t) = \mathbb{E}[\M X_1 \vert \M X_t = \M x_t] = \M x_t + (1 - t) \M v^{\theta} (\M x_t, t).
   \end{equation}
\end{proposition}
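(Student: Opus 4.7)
The plan is to recognize that conditional flow matching is an $L^2$-regression problem whose conditional Bayes estimator, given $\M X_t = \M x_t$, is the conditional expectation of the regression target $\M X_1 - \M X_0$. Once I have this, I will rewrite the target in terms of $\M X_1$ and $\M x_t$ via the straight-line interpolation and then solve algebraically for $\E[\M X_1 \mid \M X_t = \M x_t]$.

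Concretely, I would first recall that the CFM loss in~\eqref{eq:cfm} can be rewritten (by the tower property) as
\begin{equation*}
\mathcal{L}_{\mathrm{CFM}}(\theta)
= \E_{t \sim \mathcal{U}[0,1]}\, \E_{\M x_t \sim p_{\M X_t}} \left[ \E_{(\M X_0,\M X_1)\mid \M X_t = \M x_t} \left\| \M v^\theta_t(\M x_t) - (\M X_1 - \M X_0) \right\|_2^2 \right].
\end{equation*}
Pointwise in $(t,\M x_t)$ this is a standard quadratic minimization over the function value $\M v^\theta_t(\M x_t)$, so the unique minimizer is
\begin{equation*}
\M v^*_t(\M x_t) = \E[\M X_1 - \M X_0 \mid \M X_t = \M x_t].
\end{equation*}
I would justify this either directly or by invoking that $\E[\M Z]$ is the $L^2$-projection of $\M Z$ onto constants, applied to the conditional law. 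This is the step I would treat most carefully, since it uses the equivalence between the marginal and the conditional flow-matching objectives established by \cite{lipman2023flow}, namely that $\M v^*_t(\M x_t)$ also coincides with the marginal velocity field $\M v_t(\M x_t)$ transporting $p_{\M X_0}$ to $p_{\M X_1}$.

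Next, I would exploit the straight-line interpolation $\M x_t = (1-t)\M x_0 + t\M x_1$, which for $t \in [0,1)$ lets me eliminate $\M x_0$ via $\M x_0 = (\M x_t - t\M x_1)/(1-t)$. A direct computation gives
\begin{equation*}
\M x_1 - \M x_0 = \frac{\M x_1 - \M x_t}{1 - t}.
\end{equation*}
Taking conditional expectations on both sides and using linearity together with the fact that $\M x_t$ is held fixed under the conditioning,
\begin{equation*}
\M v^*_t(\M x_t) = \E[\M X_1 - \M X_0 \mid \M X_t = \M x_t] = \frac{\E[\M X_1 \mid \M X_t = \M x_t] - \M x_t}{1-t}.
\end{equation*}
Solving for the conditional mean yields $\E[\M X_1 \mid \M X_t = \M x_t] = \M x_t + (1-t)\M v^*_t(\M x_t)$, which by the hypothesis $\M v^\theta(\M x_t, t) = \M v^*_t(\M x_t)$ and the definition~\eqref{eq:det_est} is exactly $\hat{\M x}_1(\M x_t)$. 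The main obstacle I expect is the rigor of the first step, in particular justifying that pointwise minimization under the expectation is admissible (measurability and integrability of the conditional law), but this is standard in $L^2$-regression arguments and can be dispatched by noting that $\M v^\theta_t$ is assumed to lie in a class rich enough to realize the Bayes estimator.
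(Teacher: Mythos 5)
Your proof is correct and follows essentially the same route as the paper's: both rest on the characterization $\M v^*_t(\M x_t) = \E[\M X_1 - \M X_0 \mid \M X_t = \M x_t]$ (which the paper simply cites from \cite{lipman2023flow} while you re-derive it as a conditional $L^2$-regression), combined with the straight-line identity relating $\M X_1 - \M X_0$ to $\M X_1$ and $\M X_t$. Your rearrangement $\M x_1 - \M x_0 = (\M x_1 - \M x_t)/(1-t)$ is algebraically equivalent to the paper's $\M X_1 = \M X_t + (1-t)(\M X_1 - \M X_0)$, so the two arguments coincide.
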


Since the distribution $p_{\M X_1 \vert \M X_t = \M x_t}$ is not directly available, we propose to approximate it with 
\begin{equation}
    \tilde{p}_{\M X_1 \vert \M X_t = \M x_t} = \mathcal{N}( \hat{\M x}_1( \M x_t),  \nu_t^2 \M I),
\end{equation}
an isotropic Gaussian distribution centered at $\hat{\M x}_1(\M x_t) = \mathbb{E}[\M X_1 \vert \M X_t = \M x_t]$ with a time-varying covariance. As $t \to 1$, the distribution $p_{\M X_t}$ approaches the target $p_{\M X_1}$. For $\tilde{p}_{\M X_1 \vert \M X_t = \M x_t}$ to be consistent with this property, $\nu_t$ should anneal in time. We choose 
$\nu_t = {(1-t)}/{\sqrt{t^2 + (1-t)^2}}$,
which results in the valid covariance when $p_{\M X_1}$ is a standard Gaussian distribution. Our approximation is indeed the $\Pi$GDM approximation proposed by \cite{song2023pseudoinverseguided} within diffusion solvers and later by \cite{Pokle2023} for flow matching. However, instead of having a score-based interpretation and using this approximation to obtain $\nabla_{\M x_t} \log \tilde{p}_{\M Y \mid \M X_t = \M x_t}$, we propose to sample $\tilde{\M x}_1(\M x_t, \M y) $ from the distribution $\tilde{p}_{\M X_1 \vert \M X_t = \M x_t , \M Y= \M y}$ that approximates ${p}_{\M X_1 \vert \M X_t = \M x_t , \M Y= \M y}$. To this end, we show  in Proposition \ref{prop:gaussianity} that $\tilde{p}_{\M X_1 \vert \M X_t = \M x_t , \M Y= \M y}$ is indeed a Gaussian distribution, using the $\Pi$GDM approximation and the forward model of \eqref{eq:forward_model}. The proof is provided in Appendix \ref{app:proof_gauss}. 

\begin{proposition} \label{prop:gaussianity}
    Suppose that $\tilde{p}_{\M X_1 \vert \M X_t = \M x_t} = \gN( \hat{\M x}_1( \M x_t), \nu_t^2 \M I)$ ($\Pi$GDM approximation) and $p_{\M Y \mid \M X_1 = \M x_1} = \mathcal{N}(\M H \M x_1, \sigma_n^2 \M I)$ (measurement operation). Then, $\tilde{p}_{\M X_1 \vert \M X_t = \M x_t , \M Y= \M y}= \gN({\V \mu}_t(\M x_t, \M y), \M \Sigma_t)$, where
    \begin{align}
        {\V \mu}_t(\M x_t, \M y) &= \left(\nu_t^{-2} \M I + \sigma_n^{-2} \M H^{\top} \M H\right)^{-1} \left(\nu_t^{-2}  \hat{\M x}_1( \M x_t) + \sigma_n^{-2} \M H^{\top} \M y\right), \label{eq:mu_t_pro} \\
        \M \Sigma_t &= \left(\nu_t^{-2} \M I + \sigma_n^{-2} \M H^{\top} \M H\right)^{-1}.
    \end{align}
\end{proposition}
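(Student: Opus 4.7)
The plan is to apply Bayes' rule to the conditional density of $\M X_1$ given $(\M X_t, \M Y)$, under the approximate prior $\tilde{p}_{\M X_1 \mid \M X_t = \M x_t}$, and then recognize the resulting expression as the density of a Gaussian by completing the square in $\M x_1$.

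First I would write
\begin{equation*}
\tilde{p}_{\M X_1 \mid \M X_t = \M x_t, \M Y = \M y}(\M x_1) \;\propto\; \tilde{p}_{\M X_1 \mid \M X_t = \M x_t}(\M x_1)\; p_{\M Y \mid \M X_1 = \M x_1,\, \M X_t = \M x_t}(\M y),
\end{equation*}
and argue that the likelihood factor simplifies to $p_{\M Y \mid \M X_1 = \M x_1}(\M y)$. This conditional independence of $\M Y$ and $\M X_t$ given $\M X_1$ is immediate from the forward model $\M Y = \M H \M X_1 + \M n$ with $\M n$ independent of everything else: conditional on $\M X_1$, the measurement $\M Y$ no longer depends on the flow state $\M X_t$.

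Next I would substitute the two Gaussian densities provided by the hypotheses, namely $\tilde{p}_{\M X_1 \mid \M X_t = \M x_t} = \gN(\hat{\M x}_1(\M x_t),\nu_t^2 \M I)$ and $p_{\M Y \mid \M X_1 = \M x_1} = \gN(\M H \M x_1, \sigma_n^2 \M I)$, and collect all terms in the exponent that depend on $\M x_1$. The quadratic part is
\begin{equation*}
-\tfrac{1}{2}\,\M x_1^{\top}\!\left(\nu_t^{-2}\M I + \sigma_n^{-2}\M H^{\top}\M H\right)\M x_1,
\end{equation*}
so the precision matrix of the posterior is $\M \Sigma_t^{-1} = \nu_t^{-2}\M I + \sigma_n^{-2}\M H^{\top}\M H$, as claimed. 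The linear part in $\M x_1$ is $\M x_1^{\top}\!\left(\nu_t^{-2}\hat{\M x}_1(\M x_t) + \sigma_n^{-2}\M H^{\top}\M y\right)$, from which completing the square yields the mean $\V \mu_t(\M x_t,\M y) = \M \Sigma_t\!\left(\nu_t^{-2}\hat{\M x}_1(\M x_t) + \sigma_n^{-2}\M H^{\top}\M y\right)$, exactly matching \eqref{eq:mu_t_pro}. Since the density is a normalizable Gaussian with these parameters, the normalization constant is determined and no further computation is needed.

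The only step requiring real care is the completion of the square, which is routine linear algebra but must be carried out without confusing precisions and covariances; I do not anticipate any conceptual obstacle, since both factors are Gaussian in $\M x_1$ and the product of two Gaussian densities in the same variable is always Gaussian. The conditional independence of $\M Y$ from $\M X_t$ given $\M X_1$ is worth stating explicitly because it is what lets us use the given likelihood $p_{\M Y \mid \M X_1 = \M x_1}$ without a correction term involving the true joint $(\M X_t,\M X_1)$ distribution.
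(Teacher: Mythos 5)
Your proposal is correct and follows essentially the same route as the paper's proof: Bayes' rule with the conditional independence of $\M Y$ and $\M X_t$ given $\M X_1$, followed by collecting the quadratic and linear terms in $\M x_1$ and completing the square to read off the precision $\nu_t^{-2}\M I + \sigma_n^{-2}\M H^{\top}\M H$ and the mean $\M \Sigma_t(\nu_t^{-2}\hat{\M x}_1(\M x_t) + \sigma_n^{-2}\M H^{\top}\M y)$. No gaps to report.
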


Proposition \ref{prop:gaussianity} allows us to sample from $\tilde{p}_{\M X_1 \vert \M X_t = \M x_t , \M Y= \M y}$ as provided in Step 2 of \emph{Flower} using the re-parameterization trick in \eqref{eq:dest_ref}. However, the $\V \mu_t(\M x_t, \M y)$ in Step 2 (see \eqref{eq:mu_2}) is described using a proximal operator which differs from \eqref{eq:mu_t_pro}. It is easy to verify the equivalence between the two, through the fact that $\V \mu_t(\M x_t, \M y)$ of \eqref{eq:mu_t_pro} can be written as the solution to the minimization problem
\begin{equation}
    \min_{\M x \in \R^d} \left(\frac{1}{2 \sigma_n^2} \norm{\M H \M x - \M y}_2^2 + \frac{1}{2 \nu_t^2} \norm{\M x - \hat{\M x}_1( \M x_t)}_2^2\right).
\end{equation}
This directly results in ${\V \mu}_t(\M x_t, \M y) = \mathrm{prox}_{\nu_t^2 F}  \left( \hat{\M x}_1\left(\M x_t\right)\right)$ for $F_{\M y}(\M x) = \frac{1}{2 \sigma_n^2} \norm{\M H \M x - \M y}_2^2$ under the definition of the proximal operator in \eqref{eq:prox_f}. Moreover, the sampling from the anisotropic Gaussian $\mathcal{N}(\M 0, \M \Sigma_t)$ is not trivial; however, if we sample two independent $\V \epsilon_1 \in  \R^d$ and $\V \epsilon_2\in  \R^M$ from  standard Gaussian distributions, then we verify in Appendix \ref{app:sample_cov} that $\V \kappa_t  = \M \Sigma_t ( {\nu_t}^{-1}{\V \epsilon_1 } + {\sigma_n}^{-1} \M H^{\top} \V \epsilon_2)$
follows $\mathcal{N}(\M 0, \M \Sigma_t)$.  

Step 3 of \emph{Flower} aims to sample the distribution $p_{\M X_{t+\Delta t} \mid \M X_t = \M x_t, \M Y = \M y}$, which is also what we required for our ancestral-sampling procedure to hold. We now show that if we have a sample $\tilde{\M x}_1(\M x_t, \M y) $ from $\tilde{p}_{\M X_1 \vert \M X_t = \M x_t , \M Y= \M y}$, then we could obtain a sample from the distribution $\tilde{p}_{\M X_{t+\Delta t} \mid \M X_t = \M x_t, \M Y = \M y}$ using \eqref{eq:time_move}. We first compute $\tilde{p}_{\M X_{t+\Delta t} \mid \M X_t = \M x_t, \M Y = \M y}$ under the assumption that $p_{\M X_0}$ is independent of $p_{\M X_1}$ in Proposition \ref{prop:normal_all} which we prove in Appendix \ref{app:proof_norm_all}.

\begin{proposition} \label{prop:normal_all}
   If from the pre-trained flow matching we have that $p_{\M X_0} = \mathcal{N}(\M 0, \M I)$, if $p_{\M X_0}$ is independent of $p_{\M X_1}$, and if $\tilde{p}_{\M X_1 \vert \M X_t = \M x_t, \M Y = \M y}(\M x_{1}) = \mathcal{N}\left(\V \mu_t(\M x_t, \M y), \M \Sigma_t\right)$, then it holds that
   \begin{equation}
        \tilde{p}_{\M X_{t+\Delta t} \mid \M X_t = \M x_t, \M Y = \M y} = \mathcal{N}\left((t+\Delta t) \V \mu_t, (t + \Delta t)^2 \M \Sigma_t + (1 - t - \Delta t)^2 \M I \right).
   \end{equation}
\end{proposition}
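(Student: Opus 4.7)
The plan is to read $\tilde{p}_{\M X_{t+\Delta t} \mid \M X_t = \M x_t, \M Y = \M y}$ off directly from Step 3 as the pushforward of a joint Gaussian through a linear map. Equation \eqref{eq:time_move} prescribes $\M X_{t+\Delta t} = (1 - t - \Delta t)\,\V \epsilon + (t + \Delta t)\,\tilde{\M X}_1(\M x_t, \M y)$, where $\V \epsilon \sim p_{\M X_0} = \mathcal{N}(\M 0, \M I)$ is drawn afresh at the current iteration and, by the hypothesis, $\tilde{\M X}_1(\M x_t, \M y) \sim \mathcal{N}(\V \mu_t, \M \Sigma_t)$ under the conditioning.

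First I would establish that, conditional on $\M X_t = \M x_t$ and $\M Y = \M y$, the two summands are independent Gaussian vectors. Because $\V \epsilon$ is freshly sampled, it is independent of the conditioning variables $\M X_t$ and $\M Y$ and of the auxiliary noise $\V \kappa_t$ that produced $\tilde{\M X}_1(\M x_t, \M y)$ in Step 2; the hypothesis $p_{\M X_0} \perp p_{\M X_1}$ further guarantees independence of $\V \epsilon$ from $\M X_1$ itself. Consequently, the conditional law of $\V \epsilon$ remains $\mathcal{N}(\M 0, \M I)$, and it is independent of the conditional law of $\tilde{\M X}_1(\M x_t, \M y)$ supplied by Proposition \ref{prop:gaussianity}.

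The remaining step is a one-line computation: the weighted sum of two independent Gaussian vectors is Gaussian, and by linearity of expectation and additivity of covariances its mean is $(t+\Delta t)\V \mu_t$ and its covariance is $(1 - t - \Delta t)^2 \M I + (t + \Delta t)^2 \M \Sigma_t$, which matches the stated law.

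The main obstacle is conceptual rather than computational: one must justify why Step 3 is entitled to use a fresh $\V \epsilon$ in place of the $\M X_0$ that originally generated $\M X_t$ through the straight-line interpolation $\M X_t = (1-t)\M X_0 + t\M X_1$. Under the exact joint law, the conditioning $\M X_t = \M x_t$ would enforce the deterministic coupling $\M X_0 = (\M x_t - t \M X_1)/(1-t)$, correlating $\M X_0$ with any refined destination and spoiling the clean Gaussian-convolution picture. The independence assumption $p_{\M X_0} \perp p_{\M X_1}$, together with \emph{Flower}'s design choice to resample $\V \epsilon$ at every iteration, is precisely what enables the transition to reduce to an elementary sum of two independent Gaussians; this is the one non-routine point that needs careful articulation before the computation of the mean and covariance can proceed.
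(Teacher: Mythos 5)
Your proof is correct and reaches the stated mean and covariance. The route differs from the paper's only in register: the paper works at the density level, writing $\tilde{p}_{\M X_{t+\Delta t}\mid \M X_t=\M x_t,\M Y=\M y}$ as a marginalization over $\M x_1$ and then evaluating the resulting integral as a convolution of two Gaussian densities (via a change of variables and the Gaussian convolution identity), whereas you work at the random-variable level, reading $\M X_{t+\Delta t}$ off from \eqref{eq:time_move} as a weighted sum of two independent Gaussian vectors and invoking linearity of means and additivity of covariances. These are the same computation in two guises, but your version is shorter and more elementary. What you add that the paper leaves implicit is the explicit articulation of why the source term may be treated as a fresh $\mathcal{N}(\M 0,\M I)$ independent of $\tilde{\M X}_1$: the paper simply writes $p_{\M X_{t+\Delta t}\mid \M X_1=\M x_1,\M X_t=\M x_t}=\gN\bigl((t+\Delta t)\M x_1,(1-t-\Delta t)^2\M I\bigr)$, silently using the independence of $\M X_0$ and $\M X_1$ and the resampling of $\V\epsilon$, while you correctly flag that under the exact interpolation law the conditioning $\M X_t=\M x_t$ would deterministically couple $\M X_0$ to $\M X_1$ and break the argument; this observation is consistent with the tilde on $\tilde{p}$ marking the transition as a constructed approximation rather than the exact conditional.
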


From Proposition \ref{app:proof_norm_all} and by the means of the re-parametrization trick, it is easy to verify that 
\begin{equation}
    \M x_{t+\Delta t}  = (1 - t - \Delta t) \V \epsilon + ( t + \Delta t)   \tilde{\M x}_1(\M x_t, \M y) 
\end{equation}
follows $\tilde{p}_{\M X_{t+\Delta t} \mid \M X_t = \M x_t, \M Y = \M y}$ given that $\tilde{\M x} \sim \mathcal{N}\left(\V \mu_t(\M x_t, \M y), \M \Sigma_t\right)$, which happens in Step 3 of \emph{Flower}.

In our Bayesian justification, we assumed that $p_{\M X_0}$ and $p_{\M X_1}$ are independent. This assumption excludes, for example, the mini-batch optimal-transport coupling. Nevertheless, in practice, \emph{Flower} can still be applied in such settings and interpreted in a PnP manner.  
The iterative structure of \emph{Flower} closely resembles PnP methods: Step~1 acts as a denoising step, while Step~2 enforces data consistency. From this viewpoint, Step~3 can be seen as a re-projection onto the flow trajectory, as discussed in \cite{martin2025pnpflow}. However, rather than relying solely on this interpretation, we provide a Bayesian justification of the procedure. This perspective highlights a conceptual link between PnP methods and posterior sampling. We also address this empirically in our numerical results.
For exact posterior sampling to hold under our approximations, $\gamma$ should be set to one. Interestingly, in practice we find that the choice $\gamma = 0$ (i.e., ignoring the uncertainty in the destination refinement step) yields a better reconstruction quality. We provide further discussion on this effect with our numerical results in Section \ref{sec:exps}.  \fixme{Moreover, our framework remains valid for more general noise distributions $\M n \sim \gN(\M 0, \M R_n)$ where $\M R_n$ is any symmetric positive-definite matrix, as detailed in Appendix \ref{sub:extend_noise}.}

\section{Related Works \label{sec:related}}
An extensive body of work adapts pre-trained diffusion or flow priors to inverse problems by modifying the dynamics to approximate the conditional posterior. We first review diffusion-based solvers, then flow-based ones. Throughout this section, we highlight how \emph{Flower} differs from similar methods. We use the flow-matching notation with source $\M X_0$ and target $\M X_1$.  

Among diffusion solvers, DPS \citep{chung2023diffusion} approximates $p_{\M Y \mid \M X_t = \M x_t}$ by $p_{\M Y \mid \M X_1 = \hat{\M x}_1(\M x_t)}$, where $\hat{\M x}_1(\M x_t)$ is the diffusion-based denoised version of $\M x_t$, which leads to a gradient correction to the diffusion dynamics. $\Pi$GDM \citep{song2023pseudoinverseguided} approximates $\tilde{p}_{\M X_1|\M X_t=\M x_t}=\mathcal N(\hat{\M x}_1(\M x_t),\nu_t^2 I)$ for some time-annealing $\nu_t$ and replaces the gradient correction of DPS with a pseudoinverse-based update. \emph{Flower} adopts the same approximation as $\Pi$GDM, but the subsequent steps differ.  DDS \citep{Chung2023} shares the same perspective as DPS but replaces the gradient with a proximal step motivated by a manifold-preserving gradient perspective. DiffPIR \citep{zhu2023denoising} arrives at a very similar structure through half-quadratic splitting, alternating proximal data updates with diffusion denoising. Both DDS and DiffPIR are structurally close to \emph{Flower} but, unlike \emph{Flower}, they lack the Bayesian justification that interprets the updates as posterior sampling. DAPS \citep{Zhang2025Improving} also uses ancestral sampling similar to \emph{Flower}, but instead of directly computing $\tilde{p}_{\M X_1 \mid \M X_t=\M x_t,\M Y=y}$, it applies Langevin updates for its evaluation, which is more computationally demanding but extends naturally to nonlinear inverse problems.  

In the flow-matching domain, OT-ODE \citep{Pokle2023} employs a $\Pi$GDM-based approximation, similar in spirit to \fixme{that of \emph{Flower}}. However, instead of adopting our ancestral sampling scheme, they, similar to the approach of $\Pi$GDM, approximate the score of the conditional distribution $\tilde{p}_{\M{Y} \mid \M{X}_t = \M {x}_t}$ in order to construct the new velocity field. Therefore, while \fixme{our method, \emph{Flower}}, relies on the same approximation for $\tilde{p}_{\M {X}_1 \mid \M {X}_t = \M x_t}$, our methods act in different ways.
Flow-Priors \citep{zhang2024flow} tackle inverse problems by reformulating the MAP objective as a sequence of time-dependent MAP subproblems with closed-form evaluations by taking advantage of the velocity network. However, \fixme{unlike \emph{Flower}}, their approach relies on the computation of $\mathrm{Tr}\,\nabla \M v^\theta_t$, which is costly.  
D-Flow \citep{ben2024dflow} adopts an implicit regularization strategy, replacing the data-fidelity objective $\M x \mapsto \|\M H \M x - \M y\|^2$ with a latent loss 
$\M z \mapsto \|\M H(f(1,\M z)) - \M y\|^2$, where $f$ is the solution of the flow ODE. 
The latent loss is non-convex with an implicit regularization effect that prevents convergence to trivial solutions. The optimization is performed by back-propagation through ODE solutions, which is more computationally demanding \fixme{compared to the steps of \emph{Flower}}.  
PnP-Flow \citep{martin2025pnpflow} introduces a PnP framework that employs the velocity network as a denoiser. Its update steps are similar to \fixme{\emph{Flower}}, but we replace their gradient update with a proximal operation, which leads to improved reconstruction quality. Moreover, \emph{Flower} offers a Bayesian justification of the process while PnP-Flow is purely plug-and-play.

\section{Numerical Results \label{sec:exps}}
Here, we benchmark \emph{Flower} against state-of-the-art flow-based inverse solvers across a range of \fixme{linear inverse problems}. In Appendix~\ref{exp:toy}, we validate the Bayesian interpretation of \emph{Flower} through a toy experiment with Gaussian mixtures, where ground-truth posterior samples are computable. \fixme{We also present further numerical results for Fourier sampling and non-isotropic Gaussian noise in Appendix \ref{app:applicationfurtherinverse}.}

\fixme{We implement \emph{Flower} as described in Algorithm \ref{alg:flower} of the Appendix. As a practical note, the computation $\M \Sigma_t \M b$ for any $\mathbf{b} \in \R^d$ requires inversion $(\nu_t^{-2} \M I + \sigma_n^{-2} \M H^{\top} \M H)^{-1} \M b$. Wherever this inversion is required, we instead solve the corresponding linear system $(\nu_t^{-2} \M I + \sigma_n^{-2} \M H^{\top} \M H) \M z = \M b$ using conjugate gradients (CG) with a maximum of 50 iterations and an $\ell_2$-residual tolerance of $10^{-5}$ and return $\M z$ as the solution of the operation. We found the CG implementation sufficiently efficient in practice due to the positive-definite structure of $\nu_t^{-2} \M I + \sigma_n^{-2} \M H^{\top} \M H$.}



\subsection{Benchmark Experiments \label{exp:bench}}
The goal of this section is to benchmark our method against other flow-matching-based solvers for inverse problems. For fair comparisons, we adopt the benchmark introduced by \cite{martin2025pnpflow}, which also includes state-of-the-art PnP and diffusion models. We describe the datasets and experimental setup for completeness, present quantitative results in Tables~\ref{tab:benchmark_results_celeba} and~\ref{tab:benchmark_results_cats}, and provide qualitative examples in Figure~\ref{fig:compare_visual}. Finally, we discuss the key observations, highlighting the performance of \emph{Flower} and its empirical considerations.

We use two datasets for our numerical comparisons. First, we use $(128 \times 128)$ human-face images from \cite{yang2015facial}, denoted by CelebA. Second, we use resized $(256 \times 256)$ cat images from \cite{choi2020stargan}, denoted by AFHQ-Cat. We normalize all images to the range $[-1,1]$. We train on the full training sets of both datasets. 
\fixme{We tune the hyperparameters of different methods using a validation set. For CelebA, we use 32 images from the dataset's validation split. AFHQ-Cat has no validation split, so following \cite{martin2025pnpflow}, we construct one by selecting 32 images from the test set and removing them from that set. For reporting the metrics, we use 100 test images from each dataset, a limit imposed by the computational cost of baseline methods (D-Flow and Flow Priors). For faster methods (including \emph{Flower}), we also report metrics on larger datasets consisting of 1000 images for CelebA and 400 for AFHQ, which are presented in Appendix~\ref{app:moredata}. These extended results show the same performance trends as the 100-image evaluations presented in this section.}

We compare \emph{Flower} against flow-matching solvers OT-ODE \citep{Pokle2023}, D-Flow \citep{ben2024dflow}, Flow-Priors \citep{zhang2024flow}, and PnP-Flow \citep{martin2025pnpflow}, as well as two other baselines: PnP-GS \citep{hurault2022gradient}, a state-of-the-art plug-and-play method, and DiffPIR \citep{zhu2023denoising}, a diffusion-based inverse solver. All models (except DiffPIR) use the same U-Net backbone \citep{ronneberger2015u} trained with Mini-Batch OT Flow Matching \citep{tong2024improving} and a Gaussian latent prior. Pre-trained weights for the flow models and PnP-GS are taken from \cite{martin2025pnpflow}, trained with learning rate $10^{-4}$: on CelebA for 200 epochs (batch size 128) and on AFHQ-Cat for 400 epochs (batch size 64). For \emph{Flower}, we additionally train a variant without latent–target coupling (Flower-IND) using the same hyperparameters, which corresponds to our theoretical setting. While Flower-IND achieves higher performance (see Appendix \ref{app:coupling}), we primarily report Flower-OT for consistency with other flow-matching baselines. Training DiffPIR with this backbone proved ineffective due to limited capacity, so following \cite{martin2025pnpflow}, we adopt a pretrained model \cite{choi2021ilvr} from the DeepInv library \citep{Tachella_DeepInverse_A_deep_2023}, originally trained on FFHQ \citep{karras2019style}. This introduces some mismatch but provides the fairest diffusion-based baseline. Note that we use the latest checkpoints from \cite{martin2025pnpflow}, but our averaging strategy differs from theirs. In \cite{martin2025pnpflow}, results are reported by grouping four images into one batch and then averaging across 25 such batches. In contrast, we recomputed the results using 100 independent averages over the images themselves. Consequently, our reported numbers differ from those in \cite{martin2025pnpflow}.

We evaluate performance on five restoration tasks: (i) denoising with Gaussian noise with $\sigma_n=0.2$; (ii) deblurring with a $61 \times 61$ Gaussian kernel ($\sigma_b=1.0$ for CelebA, $\sigma_b=3.0$ for AFHQ-Cat) and additive noise $\sigma_n = 0.05$; (iii) super-resolution ($2\times$ downsampling for CelebA and $4\times$ for AFHQ-Cat, with $\sigma_n = 0.05$); (iv) random inpainting with $70\%$ of pixels removed ($\sigma_n=0.01$); and (v) box inpainting with a centered $40 \times 40$ mask for CelebA and $80 \times 80$ mask for AFHQ-Cat ($\sigma_n=0.05$). To report quantitative results, we use peak signal-to-noise ratio (PSNR), structural similarity index measure (SSIM), and learned perceptual image patch similarity (LPIPS). Note that for PSNR and SSIM, higher values indicate better performance, while for LPIPS, lower values are better.

To ensure fair comparisons, we adopt the optimal hyperparameters reported in \cite{martin2025pnpflow} for each method, obtained via grid search on the validation set to maximize PSNR. For PnP-Flow, we report two variants: PnP-Flow1 and PnP-Flow5, which apply one and five evaluations of the velocity network per denoising step, respectively. For \emph{Flower}, we follow the same procedure, reporting in Tables~\ref{tab:benchmark_results_celeba} and~\ref{tab:benchmark_results_cats} either the output of a single evaluation (Flower1-OT) or the average of five evaluations (Flower5-OT). A key property of \emph{Flower} is that, apart from the number $N$ of iterations and the knowledge of the noise level $\sigma_n$, it uses the same hyperparameters across different inverse problems, unlike other flow models. In particular, aside from $N$, the only hyperparameter of \emph{Flower} is $\gamma$, which controls the uncertainty of the destination refinement. Across all setups, $\gamma=0$ yields higher reconstruction quality. As discussed in Appendix~\ref{app:gamma}, the choice $\gamma=1$ produces samples that appear realistic but requires the averaging of multiple runs to achieve competitive PSNR, whereas $\gamma=0$ attains better quality with fewer averages. This observation is consistent with our toy experiments, where $\gamma=0$ encouraged sampling from higher-probability regions. For $N$, we always match the number of steps used by our main competitor, PnP-Flow. \fixme{We provide further ablation studies on the effect of the number of evaluations for the averaging and different time discretizations within \emph{Flower} in Appendix \ref{app:effectnumaveraging} and \ref{app:effectofadaptivetime}, respectively. } The full hyperparameter details are reported in Appendix~\ref{app:hyper_all}. 

\textbf{Key Observations.}
On CelebA (Table~\ref{tab:benchmark_results_celeba}), \emph{Flower} achieves the best or near-best results across all tasks, with clear gains in deblurring and box inpainting. The five-step averaging further improves the results. On AFHQ-Cat (Table~\ref{tab:benchmark_results_cats}), \emph{Flower} remains highly competitive and outperforms baselines in deblurring, box inpainting, and random inpainting, while PnP-GS is strongest in denoising. In these tables, bold numbers indicate the best results among single-average results of methods. Underlined numbers indicate the second best. Blue numbers highlight the overall best across all methods. We illustrate in Figure~\ref{fig:compare_visual} representative reconstructions across denoising, deblurring, super-resolution, and inpainting tasks. Compared to OT-ODE, D-Flow, and Flow-Priors, \emph{Flower} consistently produces fewer artifacts, while also avoiding the over-smoothing often observed in PnP-Flow. These visual trends align with the quantitative results and highlight the robustness of \emph{Flower} across diverse degradations. In Figure \ref{fig:flower_steps_inpaining}, we illustrate the solution path of \emph{Flower} for the box inpainting task shown in Figure \ref{fig:compare_visual}. As expected, Step~1 produces flow-based denoised images, while Step~2 enforces consistency with the measurements. In this specific box-inpainting setup, Step~2 primarily aligns the region outside the box with the measurements and preserves the result of Step~1 inside the box. Step~3 then mixes the refined destination with fresh source noise; as $t$ increases, the noise decreases and the reconstruction emerges. The injected noise is essential to prevent the velocity network from getting stuck at the previous iterate and to allow it to predict improved destinations. 
As shown in Table \ref{table:times} of Appendix \ref{app:computational}, \emph{Flower} has a runtime that is similar to PnP-Flow and OT-ODE, with only a slight overhead relative to PnP-Flow due to the proximal-projection step replacing a simple gradient update, while requiring the same minimal memory. In contrast, D-Flow and Flow-Priors are substantially slower and more memory-intensive.

\begin{table}[t]
    \caption{Results on 100 test images of the dataset CelebA.}
    \label{tab:benchmark_results_celeba}
    \centering
    \setlength{\tabcolsep}{2pt} 
    \tiny 
    \resizebox{1\textwidth}{!}{
        \begin{tabular}{lccccccccccccccc}
            \toprule
            \multirow{3}{*}{Method}
              & \multicolumn{3}{c}{\tiny Denoising}
              & \multicolumn{3}{c}{\tiny Deblurring}
              & \multicolumn{3}{c}{\tiny Super-resolution}
              & \multicolumn{3}{c}{\tiny Random inpainting}
              & \multicolumn{3}{c}{\tiny Box inpainting} \\
            \cmidrule(lr){2-4}\cmidrule(lr){5-7}\cmidrule(lr){8-10}\cmidrule(lr){11-13}\cmidrule(lr){14-16}
            & PSNR & SSIM & LPIPS & PSNR & SSIM & LPIPS & PSNR & SSIM & LPIPS & PSNR & SSIM & LPIPS & PSNR & SSIM & LPIPS \\
            \midrule
            Degraded           & 20.00 & 0.348 & 0.372 & 27.83 & 0.740 & 0.126 & 10.26 & 0.183 & 0.827 & 11.95 & 0.196 &  1.041 & 22.27 & 0.742 & 0.214  \\
            PnP-GS             & \textbf{32.64} & \underline{0.910} & 0.035 & 34.03 & 0.924 & 0.041 & 31.31 & 0.892 & 0.064 & 29.22 & 0.875 & 0.070 &-  &-  & -   \\
            DiffPIR           & 31.20 & 0.885 & 0.060 & 32.77 & 0.912 & 0.060 & \underline{31.52} & 0.895 & 0.033 & 31.74 & 0.917 & 0.025 & - &-  & - \\
            OT-ODE             & 30.54 & 0.859 & \textbf{\bestoverall{0.032}} & 33.01 & 0.921 & \underline{0.029}& 31.46 & 0.907 & \textbf{0.025} & 28.68 & 0.871 & 0.051 & 29.40 & 0.920 & 0.038 \\
            D-Flow             & 26.04 & 0.607 & 0.092 & 31.25 & 0.854 & 0.038 & 30.47 & 0.843 & \underline{0.026} & \textbf{33.67} & \underline{0.943} & \textbf{\bestoverall{0.015}} & \underline{30.70} & 0.899 & \underline{0.026}\\
            Flow-Priors        & 29.34 & 0.768 & 0.134 & 31.54 & 0.858 & 0.056 & 28.35 & 0.713 & 0.102 & 32.88  & 0.871 & 0.019 & 30.07 & 0.858 & 0.048 \\
            PnP-Flow1          &  31.80 & 0.905 & 0.044& \underline{34.48} & \underline{0.936}& 0.040& 31.09 & 0.902 & 0.045 & 33.05 & \textbf{0.944} & \underline{0.018} & 30.47 & \underline{0.933}& 0.037 \\
            Flower1-OT (ours)    & \underline{32.28} & \textbf{0.914} & \underline{0.034} & \textbf{34.98} & \textbf{0.947} & \textbf{\bestoverall{0.026}} & \textbf{32.36} & \textbf{0.923} & 0.034 & \underline{33.08} & \textbf{0.944} & \underline{0.018} & \textbf{31.19} & \textbf{0.945} & \textbf{\bestoverall{0.022}}\\

            \midrule 
            PnP-Flow5          &  32.30 & 0.911 & 0.056 & 34.80 & 0.940 & 0.047 & 31.49 & 0.906 & 0.056 &  \bestoverall{33.98} & \bestoverall{0.953} & 0.022 &  31.09 & 0.940 & 0.043 \\
            Flower5-OT (ours)    & \bestoverall{33.14} & \bestoverall{0.926} & 0.038  &  \bestoverall{35.67} &  \bestoverall{0.954} & 0.032 &  \bestoverall{33.09} & \bestoverall{0.932} & 0.040 & 33.95 & \bestoverall{0.953} & 0.020 & \bestoverall{31.8}7 &  \bestoverall{0.952} & 0.023\\
            \bottomrule
        \end{tabular}}
\end{table}

\begin{table}[t]
    \caption{Results on 100 test images of the dataset AFHQ-Cat.}
    \label{tab:benchmark_results_cats}
    \centering
    \setlength{\tabcolsep}{2pt} 
    \tiny 
    \resizebox{1\textwidth}{!}{
        \begin{tabular}{lccccccccccccccc}
            \toprule
            \multirow{3}{*}{Method}
              & \multicolumn{3}{c}{\tiny Denoising}
              & \multicolumn{3}{c}{\tiny Deblurring}
              & \multicolumn{3}{c}{\tiny Super-resolution}
              & \multicolumn{3}{c}{\tiny Random inpainting}
              & \multicolumn{3}{c}{\tiny Box inpainting} \\
            \cmidrule(lr){2-4}\cmidrule(lr){5-7}\cmidrule(lr){8-10}\cmidrule(lr){11-13}\cmidrule(lr){14-16}
            & PSNR & SSIM & LPIPS & PSNR & SSIM & LPIPS & PSNR & SSIM & LPIPS & PSNR & SSIM & LPIPS & PSNR & SSIM & LPIPS \\
            \midrule
Degraded          & 20.00 &  0.314 & 0.509 & 23.94 & 0.517 & 0.444 & 11.70 & 0.208 & 0.873 & 13.36 & 0.223 & 1.081 & 21.80 & 0.740 & 0.198 \\
PnP-GS            & \textbf{\bestoverall{32.58}} & \textbf{\bestoverall{0.894}} & \textbf{\bestoverall{0.072}} & \underline{27.91} & 0.753 & 0.349 & 24.15 & 0.632 & 0.362 & 29.42 & 0.836 & 0.126 & - & - & - \\
DiffPIR          & 30.58 & 0.835 & 0.189 & 27.56 & 0.728 & 0.342 & 23.65 & 0.624 & 0.402 & {31.70} & {0.881} & {0.062} & - & - & - \\
OT-ODE            & 30.03 & 0.815 & \underline{0.076} & 27.06 & 0.713 & \textbf{\bestoverall{{0.123}}} & {25.91} & {0.716} &  \textbf{\bestoverall{0.108}} & 29.40 & 0.839 & 0.090 & 24.62 & 0.875 & 0.085 \\
D-Flow            & 26.13 & 0.574 & 0.175 & 27.82 & 0.721 & \underline{0.164} & 24.64 & 0.601 & 0.190 &  {32.20} & 0.894 & \underline{0.040} & \textbf{26.26} & 0.842 & \underline{0.077} \\
Flow-Priors       & 29.41 & 0.763 & 0.153 & 26.47 & 0.700 & 0.181 & 23.51 & 0.570 & 0.272 & 32.37 & \underline{0.906} & 0.047 & \underline{26.20} & 0.818 & 0.118 \\
PnP-Flow1         & 31.18 & 0.863 & 0.135 & 27.87 & \underline{0.760} & 0.304 & \textbf{26.94} & \textbf{0.763} & \underline{0.171} & \textbf{33.00} & \textbf{0.918} & \textbf{\bestoverall{0.037}} & 26.00 & \underline{0.897} & 0.103 \\
Flower1-OT (ours) & \underline{31.69} & \underline{0.879} & 0.102 & \textbf{28.64} & \textbf{0.775} & 0.255 & \underline{26.23} & \underline{0.741} & 0.272 & \underline{32.97} & \textbf{0.918} & \underline{0.040} & 26.19 & \textbf{0.915} & \textbf{\bestoverall{0.063}} \\
            \midrule 
            PnP-Flow5         & 31.43 & 0.864 & 0.168 & 28.19 & 0.766 & 0.332 & \bestoverall{27.37} & \bestoverall{0.774}&  0.183& \bestoverall{33.75} & \bestoverall{0.929} & 0.048&  26.68 & 0.901 & 0.120  \\
            Flower5-OT (ours)    & 32.35 & 0.891 & 0.116  &  \bestoverall{28.97} & \bestoverall{0.784} & 0.283  & 26.57 & 0.750 & 0.282 & 33.70 & 0.927 & 0.045 & \bestoverall{26.88} & \bestoverall{0.922} & 0.066\\
            \bottomrule
        \end{tabular}}
\end{table}

\section{\fixme{Potential Extension to Nonlinear Inverse Problems}}
\label{sec:nonlinextension}

\fixme{\emph{Flower} as it is can handle different linear forward operators. To extend it to nonlinear inverse problems, Proposition \ref{prop:gaussianity} needs to be revisited.
When the measurement operator $\M H$ is linear, the likelihood 
$p_{\M Y \mid \M X_1 = \M x_1} = \mathcal{N}(\M H \M x_1, \sigma_n^2 \M I)$ is Gaussian, and,
combined with the $\Pi$GDM Gaussian prior 
$\tilde{p}_{\M X_1 \vert \M X_t = \M x_t} = \gN( \hat{\M x}_1( \M x_t), \nu_t^2 \M I)$, the approximate posterior 
remains Gaussian: $\tilde{p}_{\M X_1 \vert \M X_t = \M x_t , \M Y= \M y}= \gN({\V \mu}_t(\M x_t, \M y), \M \Sigma_t)$. This yields the closed-form mean $\V \mu_t$ and covariance
$\M \Sigma_t$ in Proposition~\ref{prop:gaussianity}.}

\fixme{If the measurement model is nonlinear, i.e.,
\begin{equation}
    p_{\M Y \mid \M X_1 = \M x_1} = \mathcal{N}(\M h( \M x_1), \sigma_n^2 \M I)
\end{equation}
for some nonlinear function $\M h \colon \R^d \to \R^M$, then it still holds that
\begin{equation}
    \tilde{p}_{\M X_1 \vert \M X_t = \M x_t , \M Y= \M y}(\M x_1)
    \;\propto\;
    \exp\!\Big( -\tfrac{1}{2\sigma_n^2}\| \M y-\M h( \M x_1)\|^2
    -\tfrac{1}{2\nu_t^2}\| \M x_1-\hat{\M x}_1( \M x_t)\|^2 \Big),
\end{equation}
which is no longer Gaussian due to the nonlinearity of $\M h$. Nevertheless, sampling from this distribution can be done using iterative sampling schemes, since the score (gradient of the log
density) is available in closed form:
\begin{equation}
    \nabla_{\M x_1} \log \tilde{p}_{\M X_1 \vert \M X_t = \M x_t , \M Y= \M y}(\M x_1) =
    \sigma_n^{-2} J_{\M h}(\M x_1)^\top (\M y - \M h(\M x_1))
    -
    \nu_t^{-2}\big(\M x_1-\hat {\M x}_1(\M x_t)\big),
\end{equation}
where $\M J_{\M h}$ denotes the Jacobian of $\M h$. This score function could be used to generate samples of $\tilde{p}_{\M X_1 \vert \M X_t = \M x_t , \M Y= \M y}$ via schemes such as Langevin dynamics which provides a valid substitute for Step 2 of \emph{Flower}, which enables the handling of nonlinear cases without requiring modifications to the remaining steps.}

\section{Conclusion}
We introduced \emph{Flower}, a method that leverages pre-trained flow-matching models to solve \fixme{linear} inverse problems through a simple three-step iterative procedure. By combining flow-consistent predictions, measurement-aware refinement, and time evolution, \emph{Flower} provides a principled Bayesian interpretation while retaining the plug-and-play flexibility of existing approaches. Our analysis established the conditions under which the method recovers \fixme{approximate} samples from the conditional posterior. Our experiments demonstrated both validity on toy data and state-of-the-art performance across diverse inverse problems. 

\begin{figure}
    \centering
    \includegraphics[width=0.9\linewidth]{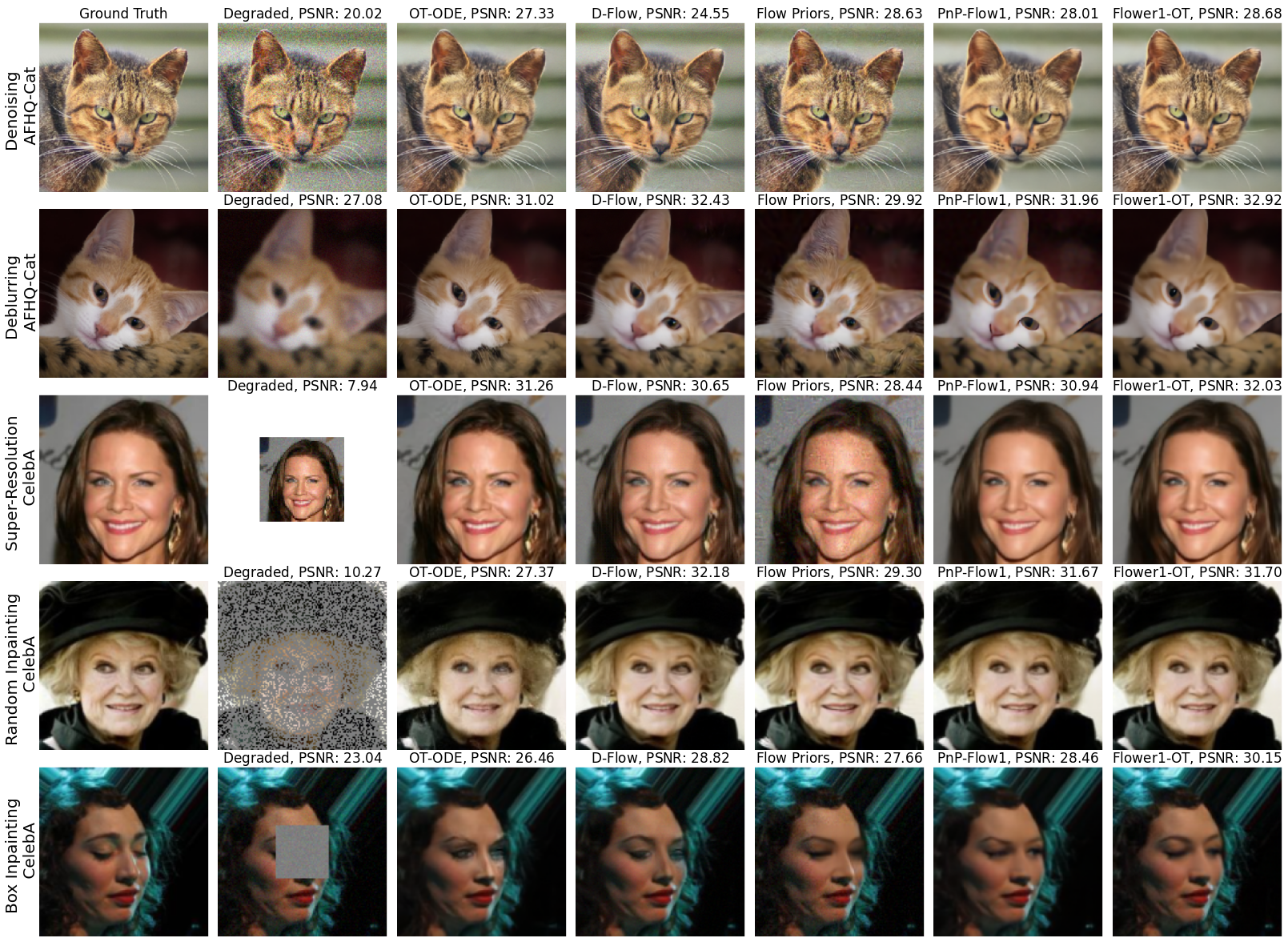}
    \caption{Visual comparison for flow-matching inverse solvers.}
    \label{fig:compare_visual}
\end{figure}

\begin{figure}
    \centering
    \includegraphics[width=0.9\linewidth]{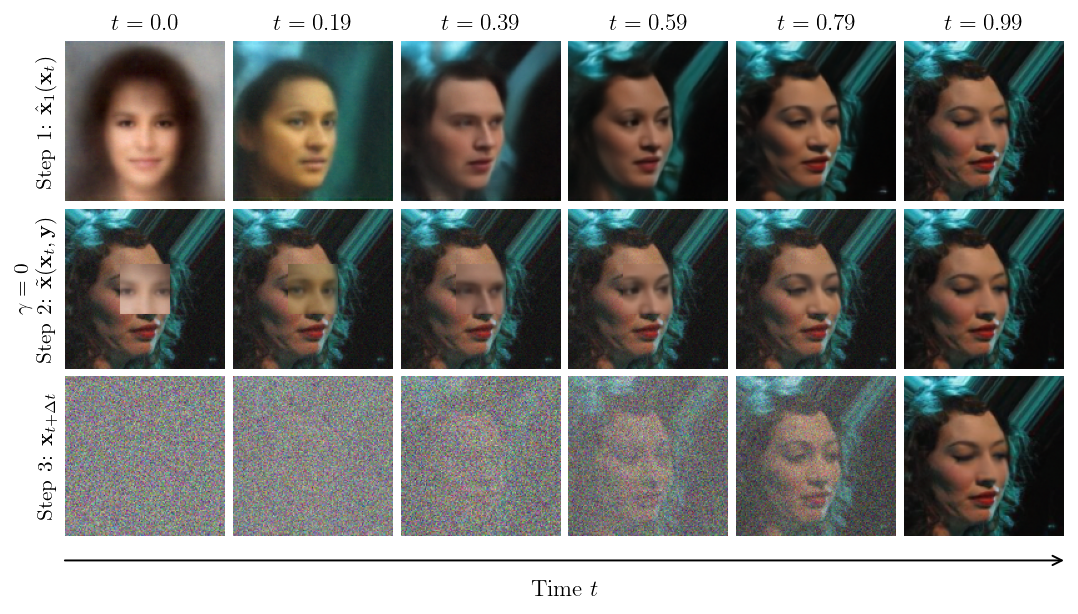}
    \caption{Solution path of \emph{Flower} for box inpainting.}
    \label{fig:flower_steps_inpaining}
\end{figure}

\clearpage
\paragraph{Acknowledgements}
The authors acknowledge support from the European Research Council (ERC Project FunLearn, Grant 101020573) and the Swiss National Science Foundation (Grant 200020\_219356). We thank Anne Gagneux and Ségolène Martin for their assistance in reproducing results, and Pakshal Bohra for insightful discussions on sampling schemes.

\paragraph{Ethics Statement.}
This work proposes a methodology for solving inverse problems in imaging using pre-trained generative models. Our method is designed as a general-purpose solver and does not target specific sensitive domains. Our experiments are conducted exclusively on publicly available datasets (CelebA and AFHQ-Cat) that are commonly used in the literature. No private or otherwise sensitive data were collected or used. Our method has potential positive applications in areas such as medical imaging, but, as with other generative techniques, should be applied responsibly to avoid misuse in creating misleading content.

\paragraph{Reproducibility Statement.}
We aim to ensure that our work is fully reproducible. We provide detailed descriptions of the algorithm (Section \ref{sec:main}) and its pseudo-code (Algorithm \ref{alg:flower}), the theoretical analysis (Appendix \ref{app:proofs}), and the experimental setup (Section \ref{sec:exps} and Appendix \ref{app:BenchmarkExperiments}), including datasets, hyperparameters, training procedures, and evaluation metrics. All datasets are publicly available and cited in the text, and our method relies on standard architectures and benchmarks, allowing independent verification of our results. Our implementation is available at \url{https://github.com/mehrsapo/Flower}.

\paragraph{Use of LLMs.}
The authors of this manuscript acknowledge the use of large language models (LLM) for grammatical polishing and typographic corrections. 

\bibliography{iclr2026_conference}
\bibliographystyle{iclr2026_conference}

\newpage
\section{Appendix}
\subsection{Flower Algorithm}
We outline the steps of \emph{Flower} in Algorithm \ref{alg:flower}.
\begin{algorithm}[h] 
\caption{Flower: Flow Matching Solver for Inverse Problems}
\label{alg:flower}
\begin{algorithmic}[1]
\Require Measurements $\M y$, forward operator $\M H$, noise level $\sigma_n$, pretrained velocity $\M v_t^\theta$, steps $N$, uncertainty flag $\gamma\in\{0,1\}$
\State Set $\Delta t = 1/N$; sample $\M x_0 \sim p_{\M X_0}$ \Comment{e.g., $\gN(\M 0,\M I_d)$}
\For{$k=0$ to $N-1$}
  \State $t = k\,\Delta t$
  \State \textbf{(Step 1) Destination estimate:} $\hat{\M x}_1(\M x_t) = \M x_t + (1-t)\,\M v_t^\theta(\M x_t)$
  \State $\nu_t = \frac{1-t}{\sqrt{t^2+(1-t)^2}}$, \quad $F_{\M y}(\M x) = \frac{1}{2\sigma_n^2}\norm{\M H\M x - \M y}_2^2$
  \State \textbf{(Step 2) Refinement mean:} $\V \mu_t = \mathrm{prox}_{\nu_t^2 F_{\M y}}\!\big(\hat{\M x}_1(\M x_t)\big)$
  \State \textbf{(Step 2) Optional uncertainty:}
    $\M \Sigma_t = \big(\nu_t^{-2}\M I_d +\sigma_n^{-2}\M H^\top \M H\big)^{-1}$
    \State \hskip1.65em sample $\V \epsilon_1\!\sim\!\gN(\M 0,\M I_d),\ \V \epsilon_2\!\sim\!\gN(\M 0,\M I_M)$, \quad $\V \kappa_t = \M \Sigma_t\!\left(\nu_t^{-1}\V \epsilon_1 + \sigma_n^{-1}\M H^\top \V \epsilon_2\right)$
  \State $\tilde{\M x}_1(\M x_t,\M y) = \V \mu_t + \gamma\,\V \kappa_t$
  \State \textbf{(Step 3) Time progression:} sample $\V \epsilon \sim p_{\M X_0}$ and set
  \State \hskip1.65em $\M x_{t+\Delta t} = (1-t-\Delta t)\,\V \epsilon + (t+\Delta t)\,\tilde{\M x}_1(\M x_t,\M y)$
\EndFor
\State \Return $\M x_1$
\end{algorithmic}
\end{algorithm}

\subsection{Proofs \label{app:proofs}}
\subsubsection{Proof of Theorem \ref{theorem:sampling} \label{app:proof_the}}
To establish this result, we first recall ancestral sampling. It is a procedure that enables us to draw samples from a marginal distribution \( p_{\M Z_K} \) when the full joint distribution over a sequence of variables \( \M Z = (\M Z_1, \M Z_2, \dots, \M Z_K) \) is defined via a chain of conditional densities and when the marginal distribution of \(\M  Z_K \) can be written as
\begin{equation}
    p_{\M Z_K}(\M z_K) = \int_{\M z_{K-1}} \cdots \int_{\M z_1} p_{\M  Z_K \mid \M  Z_{K-1}=\M  z_{K-1}}(\M z_t) \cdots p_{\M  Z_2 \mid \M  Z_1= \M z_1}(\M z_2) \, p_{\M Z_1}(\M z_1) \, \mathrm{d}\M z_1 \cdots \mathrm{d}\M z_{K-1}.
\end{equation}
Ancestral sampling offers a practical way to generate samples from \(\M p_{Z_K} \) without direct evaluation of this integral. The process samples sequentially from the distributions
\begin{equation}
    \M z_1 \sim p_{\M Z_1}, \quad \M z_2 \sim p_{\M Z_2 \mid \M Z_1 =\M  z_1}, \quad \dots, \quad \M z_K \sim p_{\M Z_K \mid \M Z_{K-1} = \M z_{K-1}}.
\end{equation}
By following this sequence, we obtain a valid sample from the marginal \(\M z_K  \sim p_{\M Z_K} \).

\begin{proof}[Proof of Theorem \ref{theorem:sampling}]
    By using the marginal distributions and the general chain rule for joint probability, we obtain
    \begin{align}
        p_{\M X_{t + \Delta t} \vert \M Y = \M y} (\M x_{t + \Delta t}) 
        &= \int_{\M x_t}  p_{\M X_{t}, \M X_{t + \Delta t} \vert \M Y = \M y} (\M x_t , \M x_{t + \Delta t}) \, \mathrm{d} \M x_t.
       \notag \\
       &= \int_{\M x_t}  p_{\M X_t \vert \M Y = \M y} (\M x_t)  p_{\M X_{t+\Delta t} \mid \M X_t = \M x_t, \M Y = \M y}(\M x_{t + \Delta t}) \, \mathrm{d} \M x_t.
    \end{align}
    It then suffices to follow the ancestral sampling procedure with $K = 2$, $\M Z_1 = \M X_{t} \vert \M Y = \M y$, and $\M Z_2 = \M X_{t + \Delta t} \vert \M Y = \M y$ to complete the proof. 
\end{proof}

\subsubsection{Proof of Proposition \ref{prop:cond_mean} \label{app:proof_prop_1}}
\begin{proof}
    \cite{lipman2023flow} have shown that the velocity vector field that minimizes the conditional flow-matching loss is
    \begin{equation}
        \M v^*_t(\M x) = \E[\M X_1 - \M X_0 \vert \M X_t = \M x_t],
    \end{equation}
    which then yields
    \begin{equation}
        \hat{\M x}_1(\M x_t) = \E[\M X_1 \vert \M X_t = \M x_t] = \E[ \M X_t + (1-t)(\M X_1 - \M X_0) \vert \M X_t = \M x_t] = \M x_t + (1-t) \M v^\theta_t(\M x).
    \end{equation}
    This is the desired result under the assumption that $\M v^\theta_t(\M x) = \M v^*_t(\M x)$.
\end{proof}

\subsubsection{Proof of Proposition \ref{prop:gaussianity} \label{app:proof_gauss}} 
\begin{proof}
    Using Bayes' rule, we can write
    \begin{equation}
        p_{\M X_1 \vert \M X_t = \M x_t , \M Y= \M y}(\M x_1) = \frac{p_{\M Y \vert \M X_1 = \M x_1, \M X_t = \M x_t} (\M y) 
        p_{\M X_1 \vert \M X_t = \M x_t} (\M x_1)}{p_{\M Y \vert \M X_t = \M x_t}(\M y)} 
        = \frac{p_{\M Y \vert \M X_1 = \M x_1} (\M y) 
        p_{\M X_1 \vert \M X_t = \M x_t} (\M x_1)}{p_{\M Y \vert \M X_t = \M x_t}(\M y)},
    \end{equation}
    where we used the conditional independence, given $\M X_1$, of $\M X_t$ and the measurement $\M Y$. We assumed that \fixme{$p_{\M X_1 \vert \M X_t = \M x_t}$ is approximated with $\tilde{p}_{\M X_1 \vert \M X_t = \M x_t} = \gN( \fixme{\hat{\M x}_1( \M x_t)}, \nu_t^2 \M I_d)$}, and we have $p_{\M Y \vert \M X_1 = \M x_1} = \gN(\M H \M x_1, \sigma_n^2 \M I)$ by construction. \fixme{Put together, we obtain the approximate density
    \begin{equation}
    \label{eq:bayesForCond}
        \tilde{p}_{\M X_1 \vert \M X_t = \M x_t , \M Y= \M y}(\M x_1) = \frac{p_{\M Y \vert \M X_1 = \M x_1} (\M y) 
        \tilde{p}_{\M X_1 \vert \M X_t = \M x_t} (\M x_1)}{p_{\M Y \vert \M X_t = \M x_t}(\M y)}.
    \end{equation}}
    Taking the logarithm of \eqref{eq:bayesForCond}, $-2 \log\left( \tilde{p}_{\M X_1 \vert \M X_t = \M x_t , \M Y= \M y}(\M x_1) \right)$ is given by
    \begin{align}
        &(\M y - \M H \M x_1)^\top \sigma_n^{-2} (\M y - \M H \M x_1) + (\M x_1 -  \fixme{\hat{\M x}_1( \M x_t)})^\top \nu_t^{-2} (\M x_1 - \fixme{\hat{\M x}_1( \M x_t)}) + C \\
        = &-2 \M x_1^\top \left( \nu_t^{-2} \fixme{\hat{\M x}_1( \M x_t)} + \sigma_n^{-2} \M H^\top \M y \right) + \M x_1^\top \left(\nu_t^{-2} \M I + \sigma_n^{-2} \M H^{\top} \M H\right) \M x_1 + C' \\
        = &-2 \M x_1^\top \left( \nu_t^{-2} \fixme{\hat{\M x}_1( \M x_t)} + \sigma_n^{-2} \M H^\top \M y \right) + \M x_1^\top \M \Sigma_t^{-1} \M x_1 + C',
    \end{align}
    where $C, C'$ are independent of $\M x_1$ and considered constants and $\M \Sigma_t = \left( \nu_t^{-2} \M I_d + \sigma_n^{-2} \M H^{\top} \M H \right)^{-1}$, which is well-defined because $\nu_t^{-2} \M I_d + \sigma_n^{-2} \M H^{\top} \M H$ is positive-definite. Completing the square with a term independent of $\M x_1$, we get
    \begin{equation}
        -2 \log\left( \tilde{p}_{\M X_1 \vert \M X_t = \M x_t , \M Y= \M y}(\M x_1) \right) = (\M x_1 - \V \mu_t)^\top \M \Sigma_t^{-1} (\M x_1 - \V \mu_t) + C'',
    \end{equation}
    where $\V \mu_t = \M \Sigma_t \left( \nu_t^{-2} \fixme{\hat{\M x}_1( \M x_t)} + \sigma_n^{-2} \M H^\top \M y \right)$ and $C''$ is again a constant independent of $\M x_1$. This yields
    \begin{equation}
        \tilde{p}_{\M X_1 \vert \M X_t = \M x_t , \M Y= \M y}(\M x_1) = e^{-C''/2} \exp\left( -\frac{1}{2} (\M x_1 - \V \mu_t)^\top \M \Sigma_t^{-1} (\M x_1 - \V \mu_t) \right).
    \end{equation}
    As $\tilde{p}_{\M X_1 \vert \M X_t = \M x_t , \M Y= \M y}$ is a probability density function, $e^{-C''/2}$ corresponds to its normalization factor, which therefore proves that $\tilde{p}_{\M X_1 \vert \M X_t = \M x_t , \M Y= \M y}(\M x_1) = \gN(\M x_1 ; \V \mu_t, \M \Sigma_t)$. 
\end{proof}

\subsubsection{Proof of Proposition \ref{prop:normal_all} \label{app:proof_norm_all}}
\begin{proof}
    By using the marginal distributions, the general chain rule for joint probability, and independence, we obtain the expression of $p_{\M X_{t+\Delta t} \mid \M X_t = \M x_t, \M Y = \M y}(\M x_{t + \Delta t})$ as
    \begin{align}
        &\int_{\R^d} p_{\M X_1 \vert \M X_t = \M x_t, \M Y = \M y}(\M x_{1}) p_{\M X_{t + \Delta t} \vert \M X_1 = \M x_1, \M X_t = \M x_t, \M Y = \M y}(\M x_{t + \Delta t}) \, \mathrm{d} \M x_1 \\
        = &\int_{\R^d} p_{\M X_1 \vert \M X_t = \M x_t, \M Y = \M y}(\M x_{1}) p_{\M X_{t + \Delta t} \vert \M X_1 = \M x_1, \M X_t = \M x_t}(\M x_{t + \Delta t}) \, \mathrm{d} \M x_1  \\
        = &\int_{\R^d} p_{\M X_1 \vert \M X_t = \M x_t, \M Y = \M y}(\M x_{1}) \gN\left( \M x_{t+\Delta t} ; (t+\Delta t) \M x_1, (1 - t - \Delta t)^2 \M I_d \right) \, \mathrm{d} \M x_1,
    \end{align}
    where we used the fact that, conditioned on $\M X_1 = \M x_1$, $\M X_{t + \Delta t} = (1 - t - \Delta t)\M X_0 + (t + \Delta t) \M x_1 \sim \gN\left( (t+\Delta t) \M x_1, (1 - t - \Delta t)^2 \M I_d \right)$. By inserting the expression of the approximation $\tilde{p}_{\M X_1 \vert \M X_t = \M x_t, \M Y = \M y}(\M x_{1})$, the approximate density $\tilde{p}_{\M X_{t+\Delta t} \mid \M X_t = \M x_t, \M Y = \M y}(\M x_{t + \Delta t})$ is given by
    \begin{align}
         &\int_{\R^d} \mathcal{N}\left(\M x_1; \V \mu_t, \M \Sigma_t\right) \mathcal{N}\left(\M x_{t + \Delta t}; \left(t+\Delta t\right) \M x_1, \left(1 - t - \Delta t\right) ^ 2\M I_d \right) \, \mathrm{d} \M x_1 \\
         =&\int_{\R^d} \gN\left(\M x_1; \V \mu_t, \M \Sigma_t\right) \gN\left(\M x_{t + \Delta t} - \left(t+\Delta t\right) \M x_1; \M 0, \left(1 - t - \Delta t\right) ^ 2\M I_d \right) \, \mathrm{d} \M x_1.
    \end{align}

    This integral can be rewritten as a convolution of two Gaussian distributions, which also yields a Gaussian distribution. Explicitly, we have that
    \begin{align}
        &\int_{\R^d} \gN\left(\M x_1; \V \mu_t, \M \Sigma_t\right) \gN\left(\M x_{t + \Delta t} - \left(t+\Delta t\right) \M x_1; \M 0, \left(1 - t - \Delta t\right) ^ 2\M I_d \right) \, \mathrm{d} \M x_1 \\
        = &\int_{\R^d} \gN\left( \frac{\M z}{t+\Delta t}; \V \mu_t, \M \Sigma_t\right) \gN\left(\M x_{t + \Delta t} - \M z; \M 0, \left(1 - t - \Delta t\right) ^ 2\M I_d \right) \, \frac{\mathrm{d} \M z}{(t + \Delta t)^d} \\
        = &\int_{\R^d} \gN\left( \M z; \V (t+\Delta t)\V\mu_t, \M (t+\Delta t)^2\M\Sigma_t\right) \gN\left(\M x_{t + \Delta t} - \M z; \M 0, \left(1 - t - \Delta t\right) ^ 2\M I_d \right) \, \mathrm{d} \M z.
    \end{align}
    Using the Gaussian convolution identity
    \begin{equation}
        \int_{\R^d} \gN(\M z ; \V \mu_1, \M \Sigma_1)\gN(\M x - \M z ; \V \mu_2, \M \Sigma_2) \, \mathrm{d}\M z = \gN(\M x ; \V \mu_1 + \V \mu_2, \M \Sigma_1 + \M \Sigma_2),
    \end{equation}
    the result simplifies to
    \begin{equation}
         \tilde{p}_{\M X_{t+\Delta t} \mid \M X_t = \M x_t, \M Y = \M y}(\M x_{t + \Delta t}) = \mathcal{N}\left(\M x_{t+\Delta t}; (t+\Delta t) \V \mu_t, (t + \Delta t)^2 \M \Sigma_t + (1 - t - \Delta t)^2 \M I_d \right),
    \end{equation}
    which completes the proof. 
\end{proof}

\subsubsection{Sampling from the Non-Isotropic Gaussian \label{app:sample_cov}}
We want to show that
\begin{equation}
    \V \kappa_t = \M \Sigma_t \left(\nu_t^{-1} \V \epsilon_1 + \sigma_n^{-1} \M H^\top \V \epsilon_2\right)
\end{equation}
has distribution $\gN(0, \M \Sigma_t)$, where $\V \epsilon_1 \sim \gN(0,\M I_d)$ and $\V \epsilon_2 \sim \gN(0,\M I_M)$ are independent, and
\begin{equation}
    \M \Sigma_t = \big(\nu_t^{-2} \M I_d + \sigma_n^{-2} \M H^\top \M H \big)^{-1}.
\end{equation}
Observe that $\V \kappa_t$ is a Gaussian random vector because it is a linear transform of the independent Gaussians $\V \epsilon_1$ and $\V \epsilon_2$. Moreover, since $\V \epsilon_1$ and $\V \epsilon_2$ are zero-mean, we have that $\mathbb{E}[\V \kappa_t] = \M 0$. Next, we compute the covariance matrix of $\V \kappa_t$ as
\begin{equation}
    \Cov(\V \kappa_t) 
    = \M \Sigma_t \, \Cov\!\big(\nu_t^{-1}\V \epsilon_1 + \sigma_n^{-1}\M H^\top \V \epsilon_2\big)\, \M \Sigma_t.
\end{equation}
By independence of $\V \epsilon_1$ and $\V \epsilon_2$, we obtain that
\begin{equation}
    \Cov\!\big(\nu_t^{-1}\V \epsilon_1 + \sigma_n^{-1}\M H^\top \V \epsilon_2\big)
    = \nu_t^{-2}\M I_d + \sigma_n^{-2}\M H^\top \M H,
\end{equation}
which implies that
\begin{equation}
    \Cov(\V \kappa_t) 
    = \M \Sigma_t \left(\nu_t^{-2}\M I_d + \sigma_n^{-2}\M H^\top \M H\right)\M \Sigma_t.
\end{equation}
But, by definition, $\M \Sigma_t = \big(\nu_t^{-2} \M I_d + \sigma_n^{-2} \M H^\top \M H\big)^{-1}$, which leads to the desired result, as
\begin{equation}
    \Cov(\V \kappa_t) = \M \Sigma_t \left(\nu_t^{-2} \M I_d + \sigma_n^{-2} \M H^\top \M H\right)\M \Sigma_t = \M \Sigma_t.
\end{equation}

\subsection{\fixme{Extension to More General Noise Types} \label{sub:extend_noise}}
\fixme{Throughout this manuscript, isotropic Gaussian noise $\M n \sim \gN(\M 0, \sigma_n^2 \M I)$ was considered. Our framework remains valid in the more general setting $\M n \sim \gN(\M 0, \M R_n)$, where $\M R_n$ is a symmetric positive definite $M \times M$ covariance matrix, up to the modification of some formulas detailed below.}

\fixme{The results of Proposition \ref{prop:gaussianity} become
\begin{align}
    {\V \mu}_t(\M x_t, \M y) &= \left(\nu_t^{-2} \M I_d + \M H^{\top} \M R_n^{-1} \M H\right)^{-1} \left(\nu_t^{-2}  \hat{\M x}_1( \M x_t) + \M H^{\top} \M R_n^{-1} \M y\right), \\
    \M \Sigma_t &= \left(\nu_t^{-2} \M I_d + \M H^{\top} \M R_n^{-1} \M H\right)^{-1} \label{eq:newCov_general_noise},
\end{align}
since the measurement operation with general noise type implies that $p_{\M Y \mid \M X_1 = \M x_1} = \mathcal{N}(\M H \M x_1, \M R_n)$.}

\fixme{Next, sampling from $\gN(\M 0, \M \Sigma_t)$ is achieved with a method similar to the one of Appendix \ref{app:sample_cov}, with the updated formula
\begin{equation}
    \V \kappa_t = \M \Sigma_t \left(\nu_t^{-1} \V \epsilon_1 + \M H^\top \M R_n^{-\frac{1}{2}} \V \epsilon_2\right).
\end{equation}
This formula uses the standard definition of the square root of a symmetric positive-definite matrix $\M A$. If $\M A = \M P \M D \M P^\top$ is its eigenvalue decomposition, then $\M A^\frac{1}{2} = \M P \M D^\frac{1}{2} \M P^\top$, where $\M D^\frac{1}{2}$ is the diagonal matrix whose entries are the square roots of the nonnegative eigenvalues of $\M A$.}

\fixme{As a result, in the measurement-aware destination refinement step of \textit{Flower} described in Section \ref{sec:main}, \eqref{eq:mu_2} now computes $\mathrm{prox}_{\nu_t^2 F_{\M y}}$ with
\begin{equation}
    F_{\M y}(\M x) = \frac{1}{2}(\M y - \M H \M x)^\top \M R_n^{-1} (\M y - \M H \M x).
\end{equation}
In the same step, the matrix $\M \Sigma_t$ is redefined to be as in \eqref{eq:newCov_general_noise}.}

\fixme{In most imaging models, the noise is effectively isotropic. Moreover, if we assume that $\M R_n$ is diagonal, the resulting expressions simplify and become straightforward to compute.}


\subsection{Numerical Results Extension}

\begin{figure}
    \centering
    \includegraphics[width=0.96\linewidth]{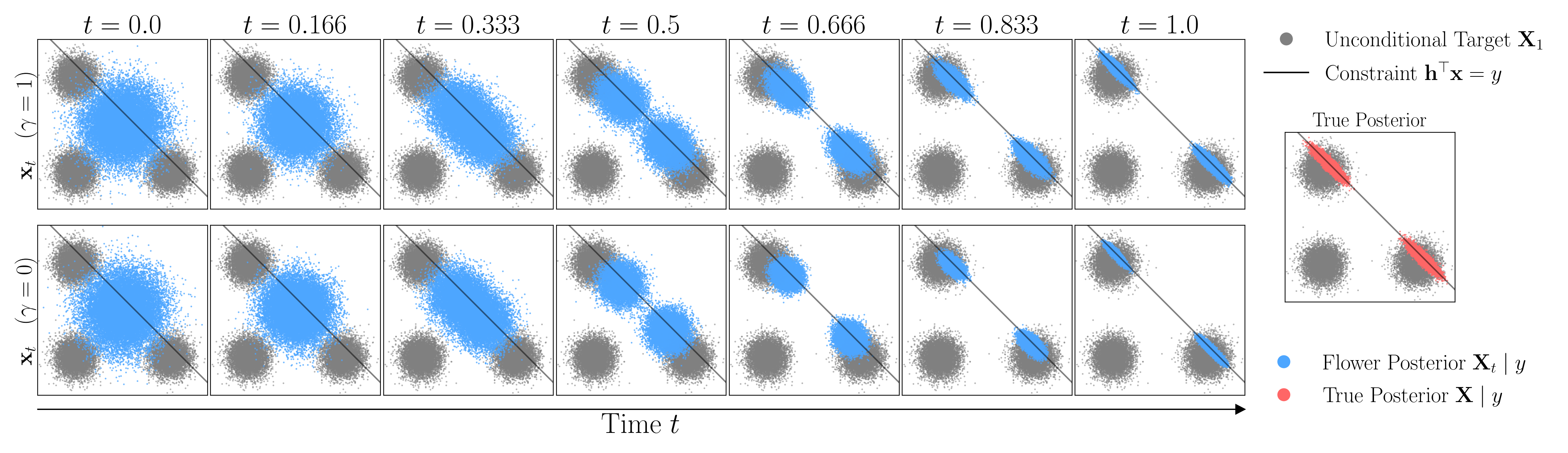}
    \caption{Temporal evolution of 2D \emph{Flower} and comparison with true posterior for noise variance $\sigma_n = 0.25$.}
    \label{fig:flower_2d_time}
\end{figure}

\subsubsection{Toy Experiment \label{exp:toy}}
The goal of this experiment is to validate our proposed sampling perspective in a setting where samples from the true posterior are available. To this end, we consider a Gaussian mixture model (GMM) as the target distribution of the data $\M X \in \R^2$. The forward measurement model consists of a single measurement vector $\M h \in \R^d$ (i.e., the forward operator is $\M H = \M h^\top$) corrupted by additive white Gaussian noise $n \sim \gN(0, \sigma_n^2)$, which results in $ y = \M h^\top \M x + n$. In this setup, the posterior distribution $p_{\M X \vert \M Y = \M y}$ is itself a GMM with known parameters, whose analytical expression is given in \eqref{app:toyExp_post}.
Geometrically, the noiseless measurement $y = \M h^\top \M x$ defines a line in the two-dimensional plane with normal vector $\M h$. Consequently, when sampling from the posterior $p_{\M X \vert \M Y = \M y}$, we expect the samples to concentrate on the portions of this line that intersect regions where the prior distribution $p_{\M X}$ has high density.

We trained the unconditional flow-matching vector field using the source $p_{\M X_0} = \gN(\M 0, \M I)$ and target $p_{\M X_1} = p_{\M X}$, with further details provided below. We ran \emph{Flower} with $N = 1000$ iterations, which corresponds to the step size $\Delta t = 0.001$. The results are shown in Figure~\ref{fig:flower_2d_time}, where we used $\M h^\top = [1.5, 1.5]$, $\sigma_n = 0.25$, and the observation $y = 1$, and where we report the solution paths of \emph{Flower} with $\gamma \in \{0,1\}$ alongside true posterior samples. We observe, that when $\gamma = 1$ (the setting required by our theory), \emph{Flower} successfully recovers the samples at $t=1$, which closely resemble the true posterior. When $\gamma = 0$ (a configuration in which the uncertainty in the destination estimation step is ignored), \emph{Flower} fails to capture samples from the tails of the distribution.
In Figure~\ref{fig:flower_2d}, we present another example of \emph{Flower} posterior sampling with $\M h^\top = [1.5, -1.5]$, $\sigma_n = 0.75$, and the observation $y = 1$. Once again, we observe that \emph{Flower} successfully generates samples that closely match the true posterior for $\gamma = 1$. In contrast, for $\gamma = 0$, samples from the tails of the true posterior are missing. In Figures~\ref{fig:flower_2d_3_gamma_0} and \ref{fig:flower_2d_3_gamma_1}, the solution path is illustrated across the successive steps of \emph{Flower} for $\gamma = 0$ and $\gamma = 1$, respectively, which allows us to visualize the dynamics of each step directly.

\paragraph{Target Prior.}\label{app:toyExp_target}
The target distribution of our data $\M X \in \R^2$ is a GMM with uniform mixtures given explicitly by
\begin{equation}
    p_{\M X} = \frac{1}{K} \sum_{k=1}^{K} \gN(\V \mu_k, \M \Sigma)
\end{equation}
for some $K \in \N$, $\V \mu_k \in \R^2$, and $\M \Sigma \in \mathbb{S}^2_{++}$. Specifically, we use $K=3$ with $\V \mu_1 = (-0.25, -0.25), \V \mu_2 = (-0.25, 0.25), \V \mu_3 = (0.25, -0.25)$, and covariance matrix $\M \Sigma = 0.25^2 \M I_2$.

\paragraph{Target Posterior.}
The advantage of this setup is that the posterior distribution $p_{\M X \vert \M Y = \M y}$ can be computed exactly using Bayes' rule. It is given by
\begin{equation}
\label{app:toyExp_post}
    p_{\M X \vert \M Y = \M y} = \sum_{k=1}^{K} w_k \gN(\V \mu_{k, \text{post}}, \M \Sigma_{\text{post}})
\end{equation}
where, for all $k = 1, \ldots, K$, $w_k \geq 0 $ are some weights and
\begin{align}
    \V \mu_{k, \text{post}} &= \left(\M \Sigma^{-1} + \sigma_n^{-2} \M h \M h^\top \right)^{-1} \left(\sigma_n^{-2} \M h y + \M \Sigma^{-1} \V \mu_k \right), \\
    \M \Sigma_{\text{post}} &= \left(\M \Sigma^{-1} + \sigma_n^{-2} \M h \M h^\top \right)^{-1}.
\end{align}

\paragraph{Training Details.}\label{app:toyExp_train}
 The underlying unconditional velocity network is a fully connected network that takes as input a 2D vector and a scalar time, concatenated into a 3D input. It consists of two hidden layers of size 256 with SiLU activations, followed by a final linear layer that outputs a 2D vector. For training, we use a batch size of $2048$ with $20000$ training steps and a learning rate of $10^{-3}$.

\begin{figure}[h]
    \centering
    \includegraphics[width=1\linewidth]{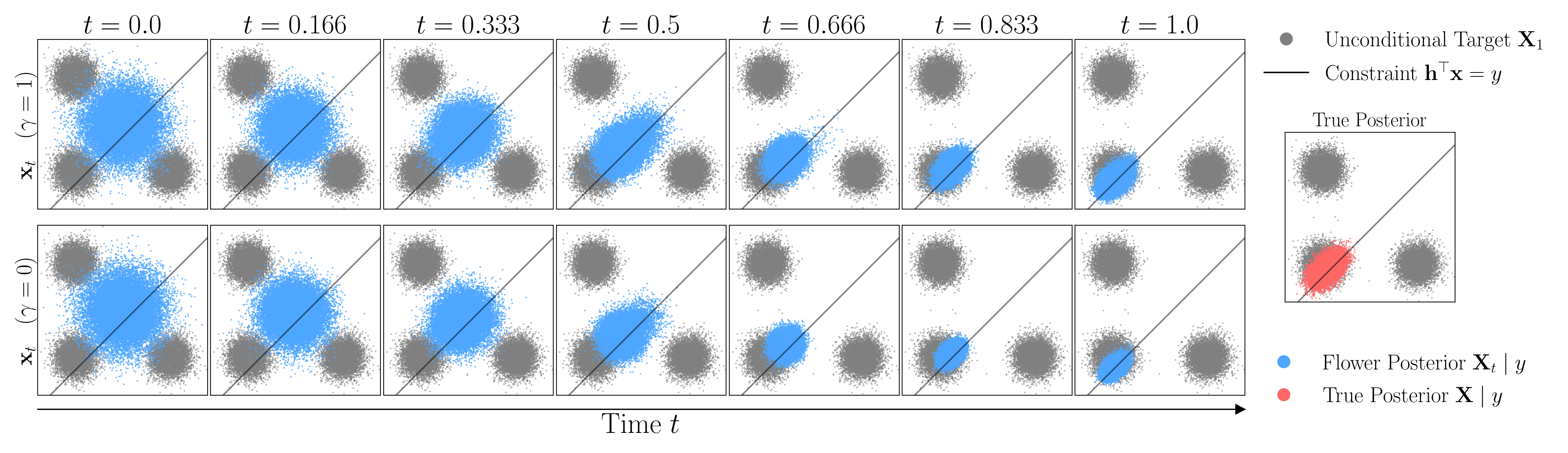}
    \caption{Temporal evolution of 2D \emph{Flower} and comparison with true posterior for noise variance $\sigma_n = 0.75$.}
    \label{fig:flower_2d}
    \centering
    \vspace{0.7cm}
    \includegraphics[width=1\linewidth]{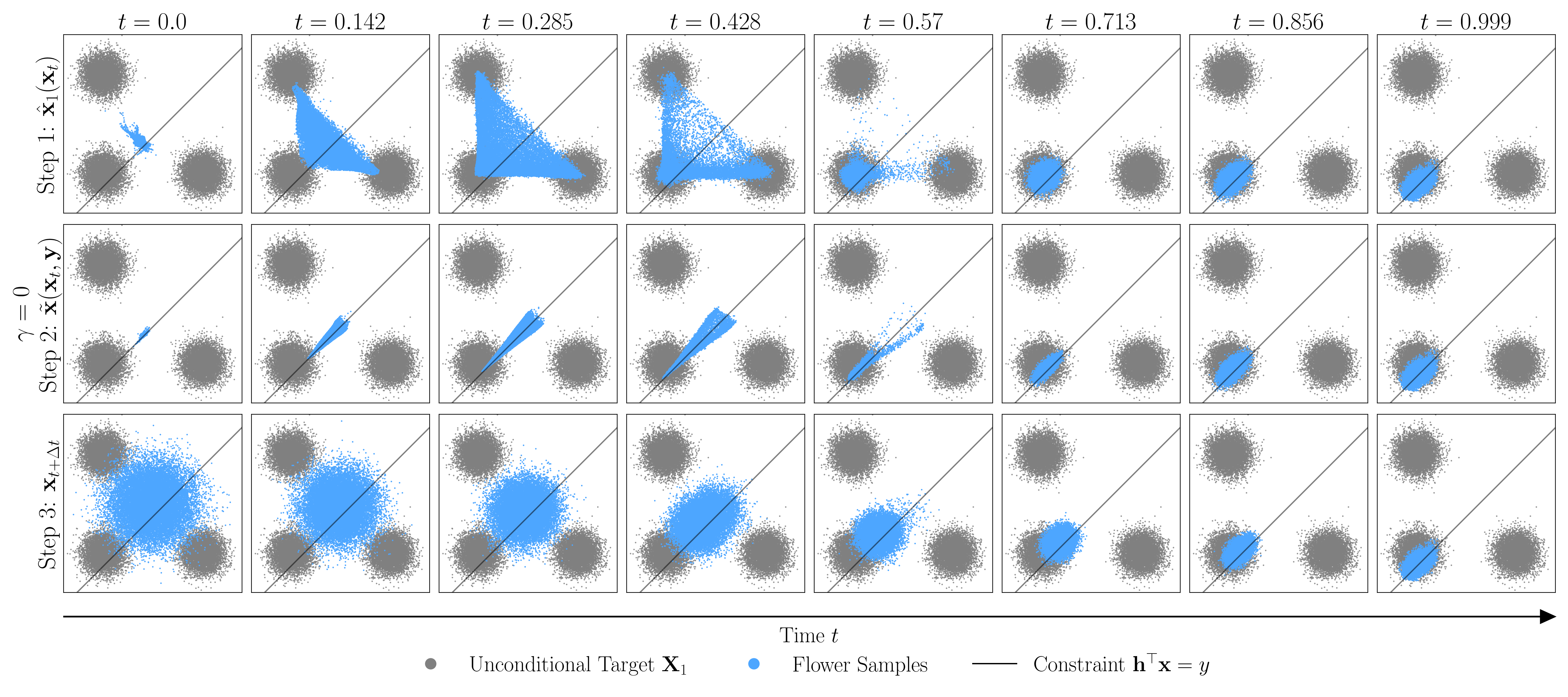}
    \caption{The three steps of 2D \emph{Flower} with temporal evolution for $\gamma = 0$.}
    \label{fig:flower_2d_3_gamma_0}
    \centering
     \vspace{0.7cm}
    \includegraphics[width=1\linewidth]{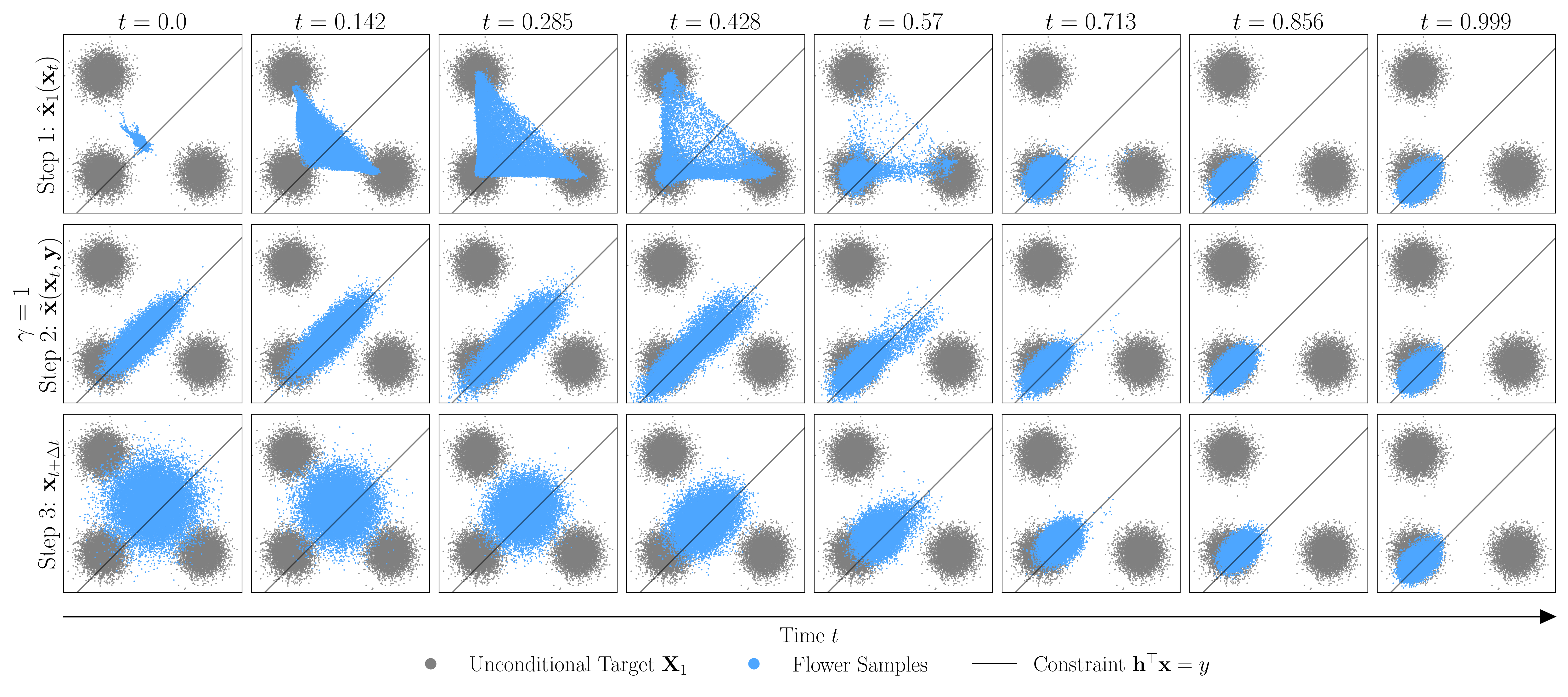}
    \caption{The three steps of 2D \emph{Flower} with temporal evolution for $\gamma = 1$.}
    \label{fig:flower_2d_3_gamma_1}
\end{figure}

\clearpage

\subsection{Benchmark Experiments \label{app:BenchmarkExperiments}}

\subsubsection{Computational Efficiency \label{app:computational}}
We report in Table \ref{table:times} the computational time and memory usage for several methods. Each entry corresponds to the average runtime for the deblurring inverse problem, averaged over 10 CelebA test images of size $128 \times 128$. All experiments were conducted on a Tesla V100-SXM2-32GB GPU. 
\begin{table}[h]
\centering
\caption{Computation times and memory usage for various methods.}
\label{table:times}
\setlength\tabcolsep{4pt}
\resizebox{0.65\textwidth}{!}{
\begin{tabular}{lccccc}
\toprule
{Method} & OT-ODE & D-Flow & Flow Priors & PnP-Flow1 & Flower1 \\
\midrule
Time (s) & 6.549 & 142.18& 63.771 & {3.020} & 5.622 \\
Memory (GB) & 1.183 & 11.125& 3.807 & 0.216 & 0.217 \\
\bottomrule
\end{tabular}}
\end{table}

\subsubsection{Effect of Source-Target Coupling \label{app:coupling}}
Our goal in this section is to benchmark the effect of source–target coupling in the training of the underlying (unconditional) velocity network. To this end, we consider two variants of coupling: one based on mini-batch optimal transport (OT) coupling and the other on independent (IND) coupling. For each variant, we report two settings: a single evaluation of \emph{Flower}; and average over five evaluations. This results in four cases, summarized in Table~\ref{tab:benchmark_results_celeba_flower_coupling}, where we provide results for all inverse problems discussed in the main text, evaluated on 100 test images from the CelebA dataset. For this table, we fix $\gamma = 0$ and $N = 100$.

\begin{table}[h]
    \caption{Effect of source-target coupling of the underlying velocity network of \emph{Flower} on different inverse problems on 100 test images of the dataset CelebA.}
    \label{tab:benchmark_results_celeba_flower_coupling}
    \centering
    \setlength{\tabcolsep}{2pt} 
    \tiny 
    \resizebox{\textwidth}{!}{
        \begin{tabular}{lccccccccccccccc}
            \toprule
            \multirow{3}{*}{Method}
              & \multicolumn{3}{c}{\tiny Denoising}
              & \multicolumn{3}{c}{\tiny Deblurring}
              & \multicolumn{3}{c}{\tiny Super-resolution}
              & \multicolumn{3}{c}{\tiny Random inpainting}
              & \multicolumn{3}{c}{\tiny Box inpainting} \\
            \cmidrule(lr){2-4}\cmidrule(lr){5-7}\cmidrule(lr){8-10}\cmidrule(lr){11-13}\cmidrule(lr){14-16}
            & PSNR & SSIM & LPIPS & PSNR & SSIM & LPIPS & PSNR & SSIM & LPIPS & PSNR & SSIM & LPIPS & PSNR & SSIM & LPIPS \\
            \midrule
            Degraded           & 20.00 & 0.348 & 0.372 & 27.83 & 0.740 & 0.126 & 10.26 & 0.183 & 0.827 & 11.95 & 0.196 &  1.041 & 22.27 & 0.742 & 0.214  \\
            Flower1-OT (ours)    & 32.28 & 0.914 & 0.034 & 34.98 & 0.947 & 0.026 & 32.36 & 0.923 & 0.034 & 33.08 & 0.944 & 0.018 & 31.19 & 0.945 & 0.022\\
            Flower5-OT (ours)    & 33.14 & 0.926& 0.038  & 35.67 & 0.954 & 0.032 & 33.09 & 0.932 & 0.040 & 33.95 & 0.953 & 0.020 & 31.87 & 0.952 & 0.023\\
            Flower1-IND (ours)    & 32.60 & 0.918 & 0.032 & 35.22 & 0.950 & 0.026 & 32.65 & 0.927 & 0.034 & 33.23 & 0.947 & 0.017 & 31.90 & 0.950 & 0.021\\
            Flower5-IND (ours)    & 33.48 & 0.930 & 0.037 & 35.90 & 0.957 & 0.031 & 33.41 & 0.935 & 0.039 & 34.24 & 0.955 & 0.020 & 32.78 &0.958 & 0.022\\
            \bottomrule
        \end{tabular}}
\end{table}

We observe that independent coupling improves the results, which is consistent with our theoretical requirements. Nevertheless, the OT-based variant remains highly competitive, as also illustrated in the main paper. For visual comparison, in Figure~\ref{fig:coupling}, we show an example from the deblurring task on an image from the CelebA dataset.

\subsubsection{Effect of $\gamma$ \label{app:gamma}}
The hyperparameter $\gamma \in \{0,1\}$ in \emph{Flower} controls whether the uncertainty of the refinement step (Step 2) is taken into account. While $\gamma = 1$ is required for a Bayesian interpretation of our method, in practice we find that $\gamma = 0$ yields more favorable image-reconstruction metrics. This observation is consistent with the toy experiments, where $\gamma = 0$ led \emph{Flower} to generate samples concentrated in higher-probability regions, while failing to capture the tails of the posterior.

To illustrate this effect, we provide a visual example of the deblurring task on a CelebA image, using the velocity network trained with mini-batch OT coupling for consistency with the numerical results reported in this paper. A single reconstruction with $\gamma = 1$ achieves a PSNR of only $30.84$, compared to $33.01$ for a single reconstruction with $\gamma = 0$ (Figure~\ref{fig:coupling}). Moreover, with $\gamma = 1$, it is necessary to average over 100 reconstructions to reach a PSNR comparable to that of $\gamma = 0$, where we average only over 5 reconstructions (Figure~\ref{fig:coupling}).

To further illustrate this behavior, we show in Figures~\ref{fig:flower_2d_3_gamma_0} and \ref{fig:flower_2d_3_gamma_1} the solution paths of the three \emph{Flower} steps over time for $\gamma = 0$ and $\gamma = 1$, respectively. We observe that $\gamma = 1$ leads to a noisier refinement step. Consequently, we adopt $\gamma = 0$ for all inverse problems, as this provides a more stable refinement step and consistently yields a better quality of reconstruction despite fewer averages.

\newpage
\begin{figure}
    \centering
    \includegraphics[width=0.9\linewidth]{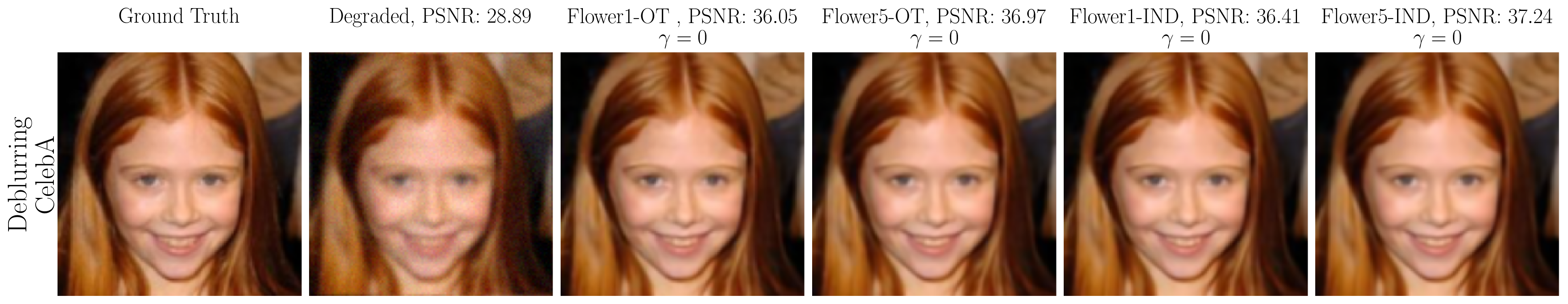}
    \caption{Effect of the source-target coupling for the underlying flow ($\gamma = 0$). }
    \label{fig:coupling}
\end{figure}

\begin{figure}
    \centering
    \includegraphics[width=1\linewidth]{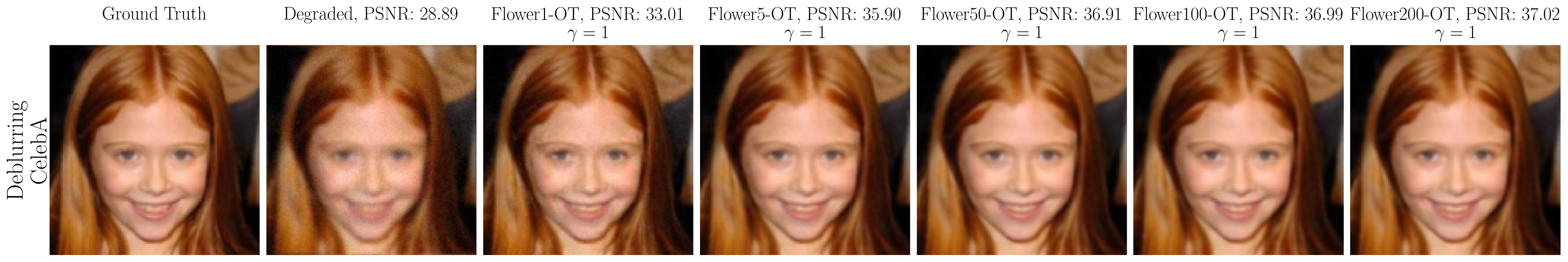}
    \caption{Deblurring results with $\gamma = 1$, obtained by averaging over $1$, $5$, $50$, $100$, and $200$ runs of \emph{Flower}.}
    \label{fig:gamma_1_ot}
\end{figure}

\begin{figure}
    \centering
    \includegraphics[width=0.8\linewidth]{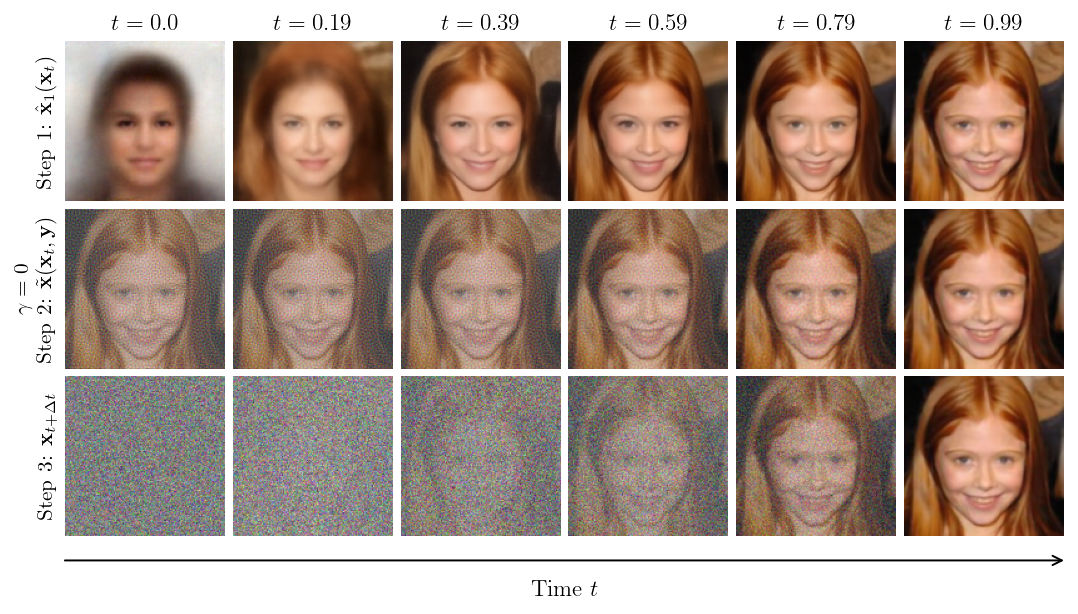}
    \caption{Solution path of \emph{Flower} for deblurring with $\gamma = 0$.}
    \label{fig:flower_steps_deblur_gamma0}
\end{figure}

\begin{figure}
    \centering
    \includegraphics[width=0.8\linewidth]{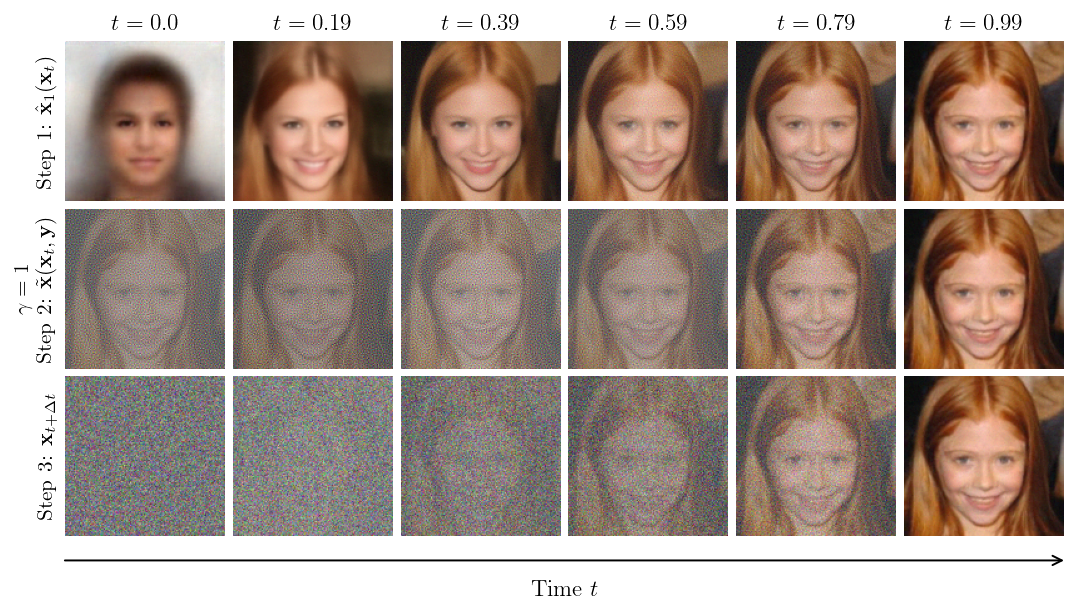}
    \caption{Solution path of \emph{Flower} for deblurring with $\gamma = 1$.}
    \label{fig:flower_steps_deblur_gamma1}
\end{figure}
\clearpage

\newpage
\subsubsection{\fixme{Effect of the Number of Evaluations for Averaging}}
\label{app:effectnumaveraging}

\fixme{In Tables \ref{tab:benchmark_results_celeba} and \ref{tab:benchmark_results_cats} of the main paper, we reported results for Flower-1 and Flower-5, which correspond to using a single evaluation and the average over five evaluations of \textit{Flower}, respectively. Here, we ablate the effect of this averaging on both the reconstruction metrics and the computational cost. We focus on the deblurring and random inpainting tasks described in Section \ref{exp:bench} for the CelebA dataset. Specifically, we vary the number of \textit{Flower} averaging $N_{\mathrm{Avg}}$ from 1 to 10 and report the average PSNR, SSIM, LPIPS, and runtime (in seconds) over 100 CelebA images, measured on a Tesla V100-SXM2-32GB GPU. For the hyperparameters of \emph{Flower}, we use $\gamma=0$, $N=100$. }

\fixme{As shown in Figure \ref{fig:abl_eval}, PSNR and SSIM improve with a steeper gain for smaller numbers of averagings (up to around five) and tend to saturate afterward. LPIPS also increases with averaging, which is undesirable since higher LPIPS indicates worse perceptual quality. The runtime scales linearly with the number of averagings, as expected. Finally, in Figure \ref{fig:abl_eval_vis}, we present a visual comparison that confirms these trends: averaged reconstructions (e.g., using 5 or 10 evaluations) appear smoother than the single-evaluation result, while the difference between averaging over 5 and 10 evaluations is marginal.}

\begin{figure}[h]
    \centering
    \includegraphics[width=1\linewidth]{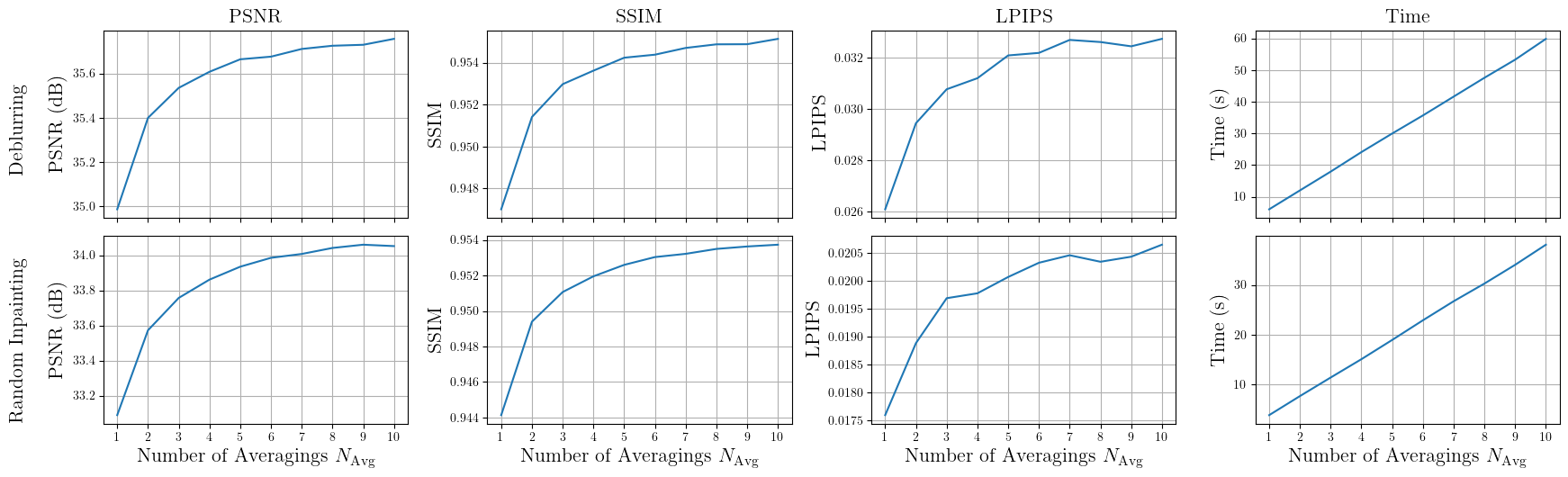}
    \caption{\fixme{Effect of averaging over different numbers of evaluations of \textit{Flower} on quantitative metrics and computational time.}}
    \label{fig:abl_eval}
\end{figure}

\begin{figure}[h]
    \centering
    \includegraphics[width=1\linewidth]{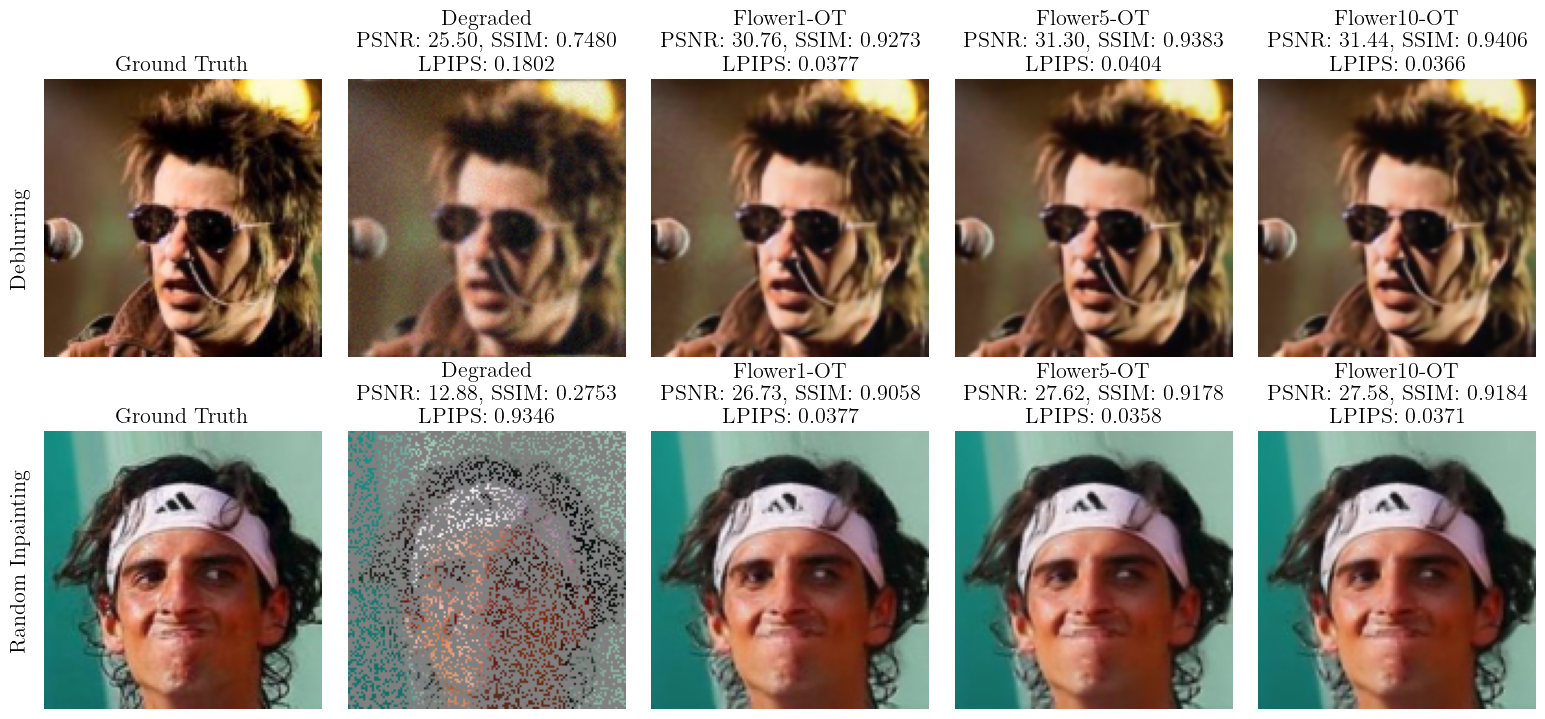}
    \caption{\fixme{Visual comparison of the effect of averaging over different numbers of evaluations of \textit{Flower}.}}
    \label{fig:abl_eval_vis}
\end{figure}

\newpage
\subsubsection{\fixme{Effect of Adaptive Time Steps}}
\label{app:effectofadaptivetime}

\fixme{Our theoretical framework does not tie \emph{Flower} to a particular time discretization. This raises the question of whether non-uniform time steps could improve practical performance. To explore this, we compare the uniform (Euler) time discretization with several alternatives. Specifically, we consider a power-law schedule $t_k = \left(\frac{k}{N}\right)^\alpha$, where $\alpha = 1$ recovers the uniform grid, $\alpha > 1$ concentrates steps near the start of the trajectory, and $\alpha < 1$ allocates more steps near the end. In our experiments, we use $\alpha = 0.5$ and $\alpha = 2$. We further evaluate a cosine schedule $t_k = \frac{1 - \cos(\pi k / N)}{2}$, which yields finer resolution at both the beginning and end of the trajectory. For clarity, Figure \ref{fig:diffrent_time_dist} visualizes these time grids for $N = 100$. We vary the total number of steps $N \in \{10, 20, 50, 100\}$ and validate these schedules on two restoration tasks (deblurring and random inpainting) shown in Figures~\ref{fig:deblurring_adaptive} and~\ref{fig:random_inpainting_adaptive}. For the hyperparameters of \emph{Flower}, we use $\gamma=0$, and $N_{\mathrm{Avg}} = 1$.  Our results indicate that the $\alpha = 0.5$ schedule achieves noticeably better reconstruction quality with fewer total steps $N$, while $\alpha = 2$ tends to underperform relative to the uniform case. The cosine schedule sometimes provides improvements at low step counts. For larger numbers of steps, all schedules converge to similar performance. Our findings highlight the potential of adaptive time discretizations (with more resolution toward the end of trajectory) to improve quality--compute trade-offs.}

\begin{figure}[H]
    \centering
    \includegraphics[width=0.6\linewidth]{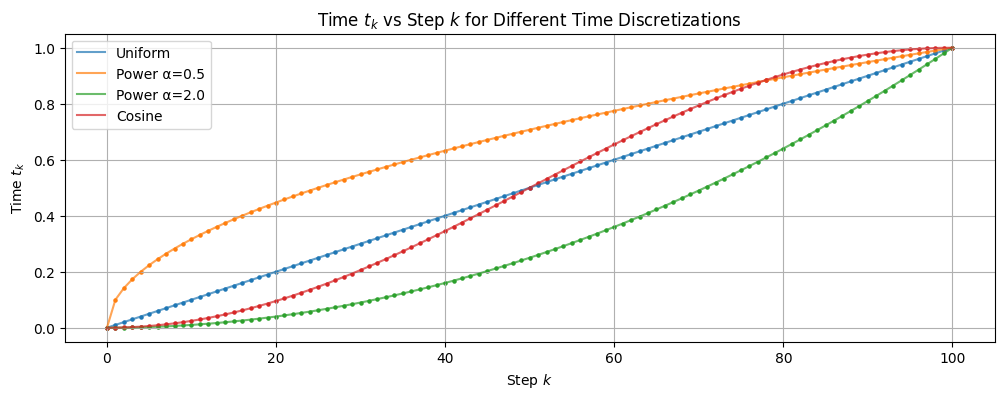}
    \caption{Time $t_k$ vs step $k$ for different time discretizations.}
    \label{fig:diffrent_time_dist}
\end{figure}

\subsubsection{\fixme{Increasing the Size of Test Sets}}
\label{app:moredata}
\fixme{In the main paper, we evaluated all tasks for each method using 100 CelebA
images and 100 AFHQ-Cat images. We chose a test size of 100 because methods
such as Flow Priors and D-Flow require backpropagation during inference and
are therefore slow. In this appendix, we increase the test set to 1000 images
for CelebA and 400 images for AFHQ-Cat in order to further validate \emph{Flower} compared to existing (efficient) flow-matching 
methods. The results, presented in Tables~\ref{tab:benchmark_results_celeba1000}
and \ref{tab:benchmark_results_afhq400}, show the same trend as in the main
paper as \emph{Flower} achieves competitive reconstruction quality.
}

\begin{table}[H]
    \caption{\fixme{Results on 1000 test images of the dataset CelebA.}}
    \label{tab:benchmark_results_celeba1000}
    \centering
    \setlength{\tabcolsep}{2pt}
    \tiny
    \resizebox{1\textwidth}{!}{
    \begin{tabular}{lccccccccccccccc}
        \toprule
        \multirow{3}{*}{Method}
          & \multicolumn{3}{c}{\tiny Denoising}
          & \multicolumn{3}{c}{\tiny Deblurring}
          & \multicolumn{3}{c}{\tiny Super-resolution}
          & \multicolumn{3}{c}{\tiny Random inpainting}
          & \multicolumn{3}{c}{\tiny Box inpainting} \\
        \cmidrule(lr){2-4}\cmidrule(lr){5-7}\cmidrule(lr){8-10}\cmidrule(lr){11-13}\cmidrule(lr){14-16}
        & PSNR & SSIM & LPIPS & PSNR & SSIM & LPIPS & PSNR & SSIM & LPIPS & PSNR & SSIM & LPIPS & PSNR & SSIM & LPIPS \\
        \midrule
        Degraded      & 20.00 & 0.351 & 0.368 & 27.83 & 0.741 & 0.123 & 10.38 & 0.185 & 0.828 & 12.10 & 0.197 & 1.038 & 22.29 & 0.745 & 0.211 \\
        OT-ODE        & 30.53 & 0.857 & \textbf{0.032} & 33.06 & 0.920 & \underline{0.029} & \underline{31.54} & \underline{0.905} & \textbf{0.024} & 28.74 & \underline{0.870} & 0.052 & 29.65 & 0.921 & \underline{0.035} \\
        PnP-Flow1     & \underline{31.84} & \underline{0.904} & 0.044 & \underline{34.56} & \underline{0.935} & 0.038 & 31.16 & 0.901 & 0.044 & \underline{33.16} & \textbf{0.944} & \underline{0.019} & \underline{30.47} & \underline{0.935} & 0.036 \\
        Flower1-OT (ours) 
                      & \textbf{32.31} & \textbf{0.913} & \underline{0.033} & \textbf{35.03} & \textbf{0.946} & \textbf{0.026} & \textbf{32.45} & \textbf{0.922} & \underline{0.034} & \textbf{33.19} & \textbf{0.944} & \textbf{0.017} & \textbf{31.22} & \textbf{0.946} & \textbf{0.021} \\
        \bottomrule
    \end{tabular}}
\end{table}

\begin{table}[H]
    \caption{\fixme{Results on 400 test images of the dataset AFHQ-Cat.}}
    \label{tab:benchmark_results_afhq400}
    \centering
    \setlength{\tabcolsep}{2pt}
    \tiny
    \resizebox{1\textwidth}{!}{
    \begin{tabular}{lccccccccccccccc}
        \toprule
        \multirow{3}{*}{Method}
          & \multicolumn{3}{c}{\tiny Denoising}
          & \multicolumn{3}{c}{\tiny Deblurring}
          & \multicolumn{3}{c}{\tiny Super-resolution}
          & \multicolumn{3}{c}{\tiny Random inpainting}
          & \multicolumn{3}{c}{\tiny Box inpainting} \\
        \cmidrule(lr){2-4}\cmidrule(lr){5-7}\cmidrule(lr){8-10}\cmidrule(lr){11-13}\cmidrule(lr){14-16}
        & PSNR & SSIM & LPIPS & PSNR & SSIM & LPIPS & PSNR & SSIM & LPIPS & PSNR & SSIM & LPIPS & PSNR & SSIM & LPIPS \\
        \midrule
        Degraded   & 20.00 & 0.315 & 0.517 & 24.03 & 0.516 & 0.452 & 11.88 & 0.217 & 0.881 & 13.55 & 0.231 & 1.071 & 21.80 & 0.741 & 0.203 \\
        OT-ODE     & 30.01 & 0.814 & \textbf{0.077} & 27.06 & 0.711 & \textbf{0.126} & 25.92 & 0.715 & \textbf{0.109} & 29.38 & \underline{0.839} & 0.091 & 24.77 & 0.875 & \underline{0.085} \\
        PnP-Flow1  & \underline{31.17} & \underline{0.862} & 0.136 & \underline{27.94} & \underline{0.759} & 0.306 & \textbf{26.96} & \textbf{0.762} & \underline{0.170} & \textbf{33.01} & \textbf{0.918} & \textbf{0.037} & \underline{26.46} & \underline{0.897} & 0.102 \\
        Flower1-OT (ours) 
                   & \textbf{31.66} & \textbf{0.878} & \underline{0.104} & \textbf{28.63} & \textbf{0.773} & \underline{0.255} & \underline{26.24} & \underline{0.740} & 0.273 & \underline{32.98} & \textbf{0.918} & \underline{0.040} & \textbf{26.68} & \textbf{0.915} & \textbf{0.062} \\
        \bottomrule
    \end{tabular}}
\end{table}

\newpage
\begin{figure}[H]
    \centering
    \includegraphics[width=0.9\linewidth]{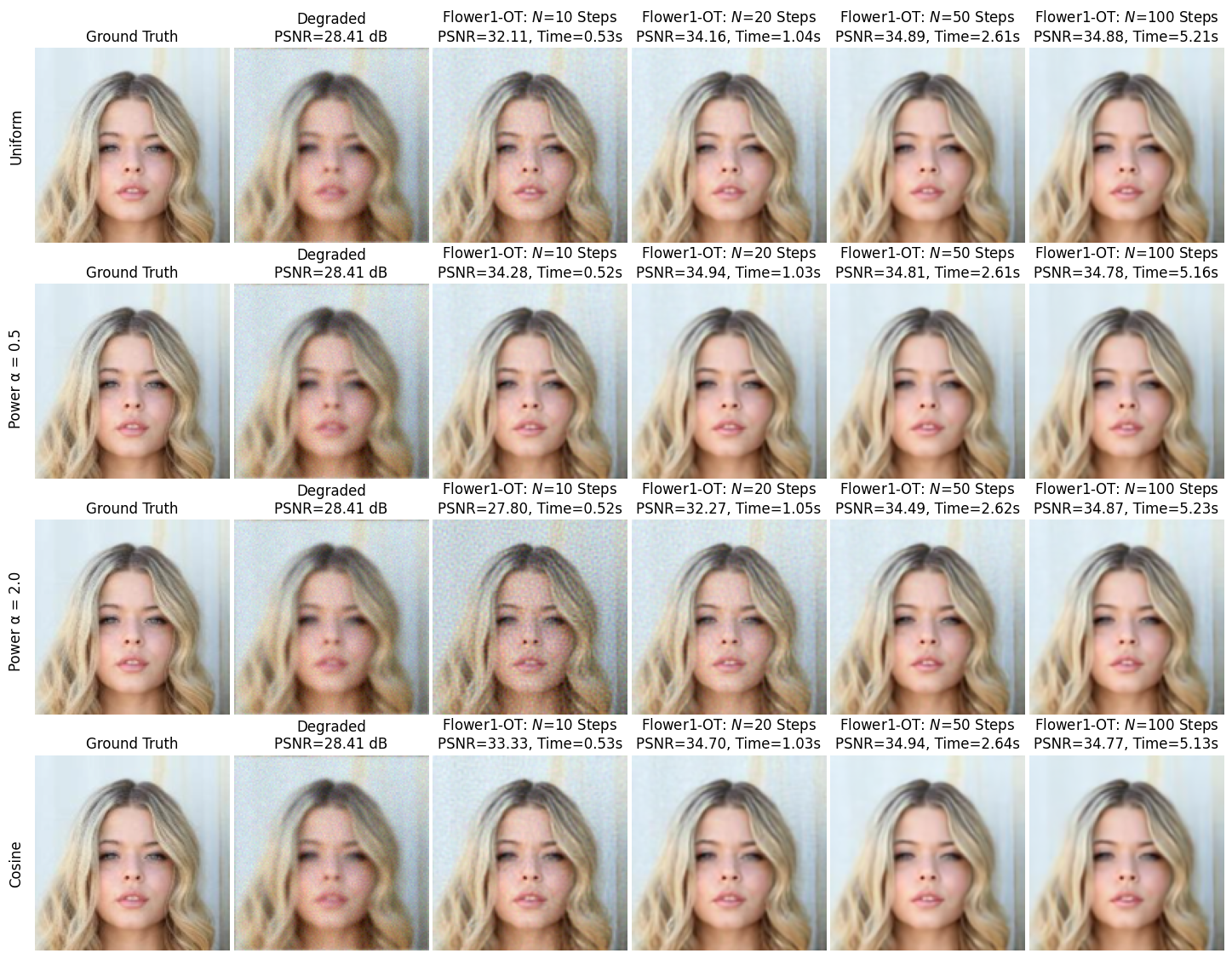}
    \caption{Deblurring example for uniform versus adaptive solvers.}
    \label{fig:deblurring_adaptive}
\end{figure}
\vspace{0pt}
\begin{figure}[H]
    \centering
    \includegraphics[width=0.9\linewidth]{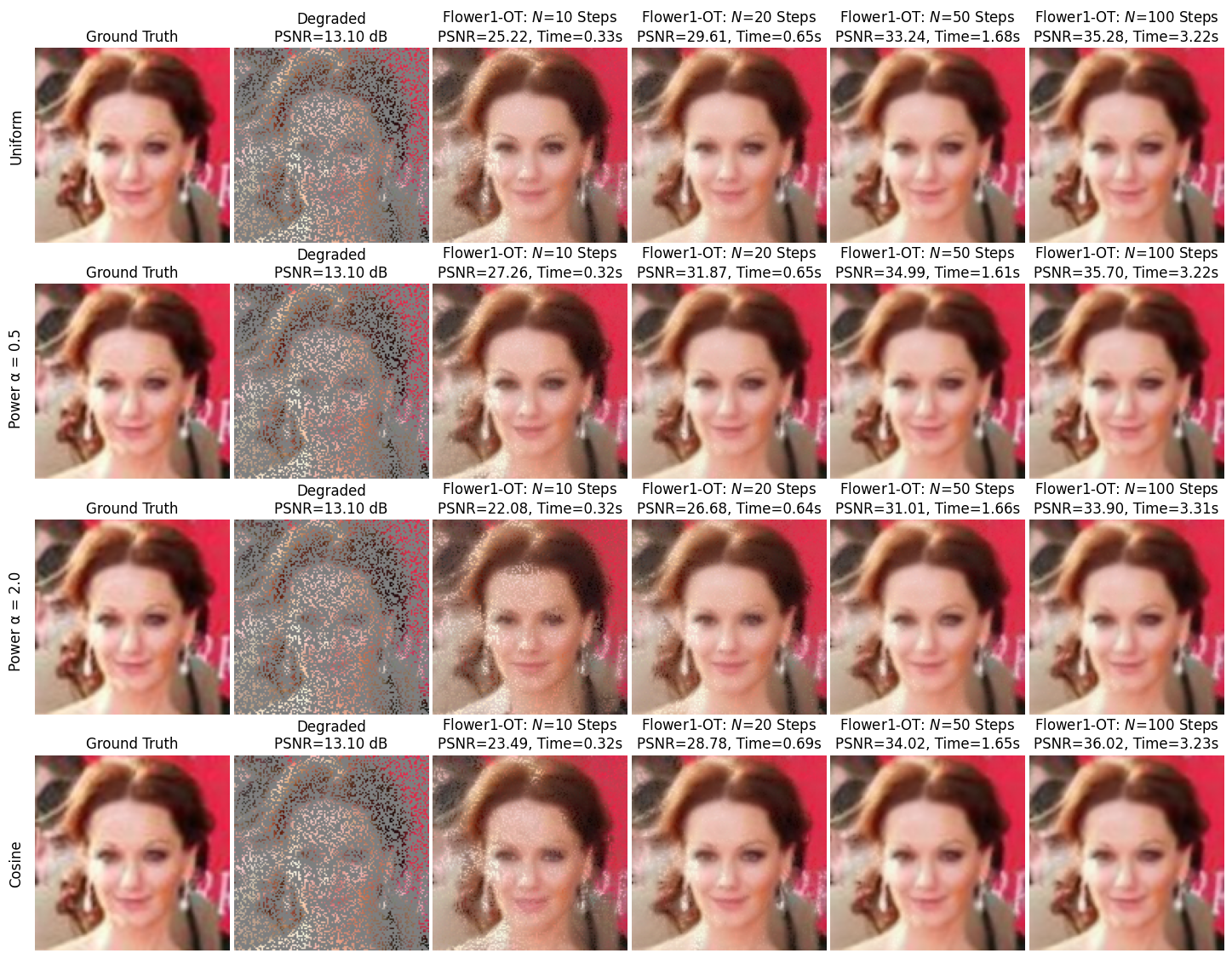}
    \caption{Random inpainting example for uniform versus adaptive solvers.}
    \label{fig:random_inpainting_adaptive}
\end{figure}

\newpage
\subsubsection{\fixme{Application to Further Inverse Problems}}
\label{app:applicationfurtherinverse}

\fixme{We aim to validate the generality of our framework by evaluating it on a wider range of inverse problems.}
\fixme{\paragraph{Compressed Sensing Fourier Sampling.} We choose a forward operator inspired by magnetic resonance imaging (MRI) reconstruction: the Fourier transform followed by a binary sampling mask. We use two masks: (i) a Cartesian mask with a sampling ratio of $0.2188$, and (ii) a radial mask with a sampling ratio of $0.2990$. The measurements are corrupted with additive white Gaussian noise of standard deviation $\sigma_n = 0.002$.} \fixme{For our examples, we use two images from the AFHQ-Cat dataset. For the hyperparameters of \emph{Flower}, we use $\gamma=0$, $N=100$, and $N_{\mathrm{Avg}} = 1$. Since the images are RGB, we apply the forward operator channel by channel. We summarize our results in Figure \ref{fig:mri}. In both cases, we observe that \emph{Flower} successfully handles this inverse problem and produces reconstructions that improve on the zero-filled baseline.
}
\begin{figure}[H]
    \centering
    \includegraphics[width=0.75\linewidth]{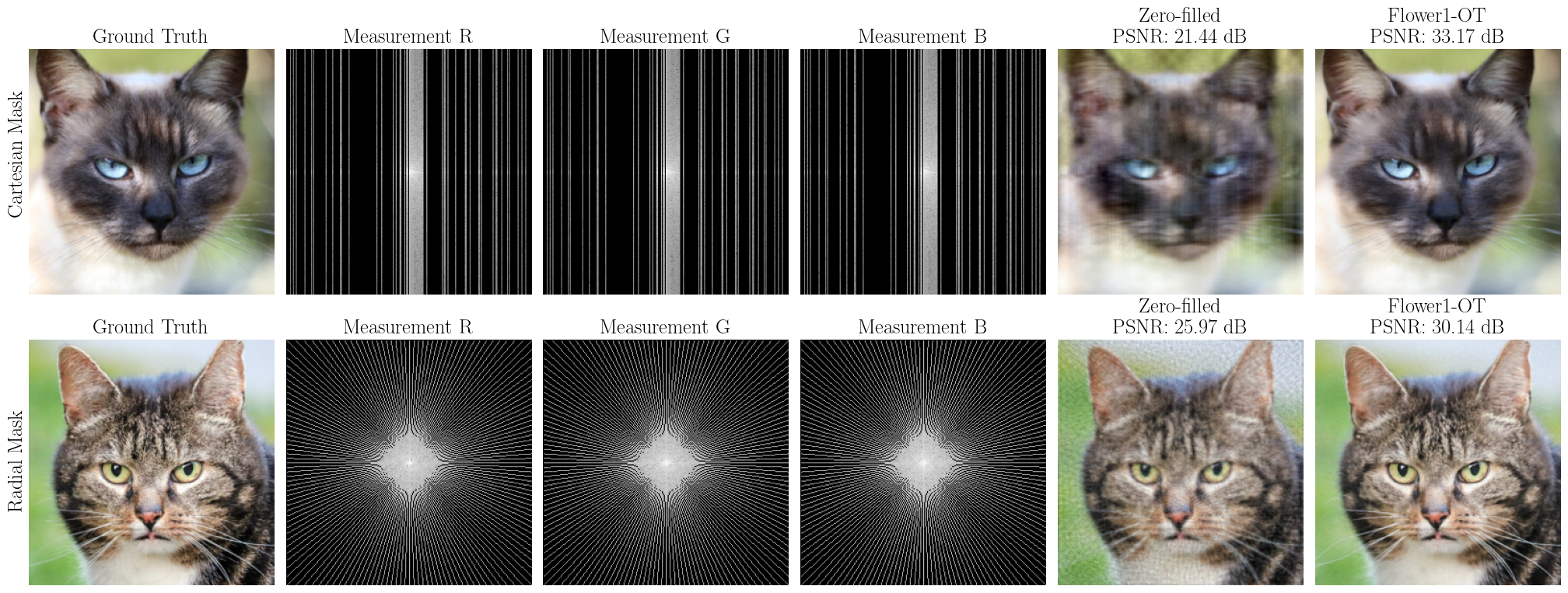}
    \caption{\fixme{Reconstruction results for compressed sensing with Cartesian and radial masks.}}
    \label{fig:mri}
\end{figure}
\paragraph{\fixme{Non-Isotropic Gaussian Noise.}} \fixme{Here, we aim to verify the theoretical results from Appendix \ref{sub:extend_noise} for non-isotropic additive Gaussian noise using numerical experiments. For demonstration, we focus on two tasks: image denoising and random inpainting using the same setup described in Section \ref{sec:exps} for the CelebA images, except for the noise dynamics. Here, we add non-isotropic Gaussian noise to the image by applying Gaussian noise with $\sigma_n = 3$ to the central box of size ($64\times 64$) of the ($128 \times 128$) image and $\sigma_n = 1$ outside this box. For the \emph{Flower} hyperparameters, we use $\gamma = 0$, $N = 100$, and $N_{\mathrm{Avg}} = 1$. We observe that, consistent with our theory, we are able to recover good-quality reconstructions given a non-isotropic Gaussian noise. }

\begin{figure}[H]
    \centering
    \includegraphics[width=0.65\linewidth]{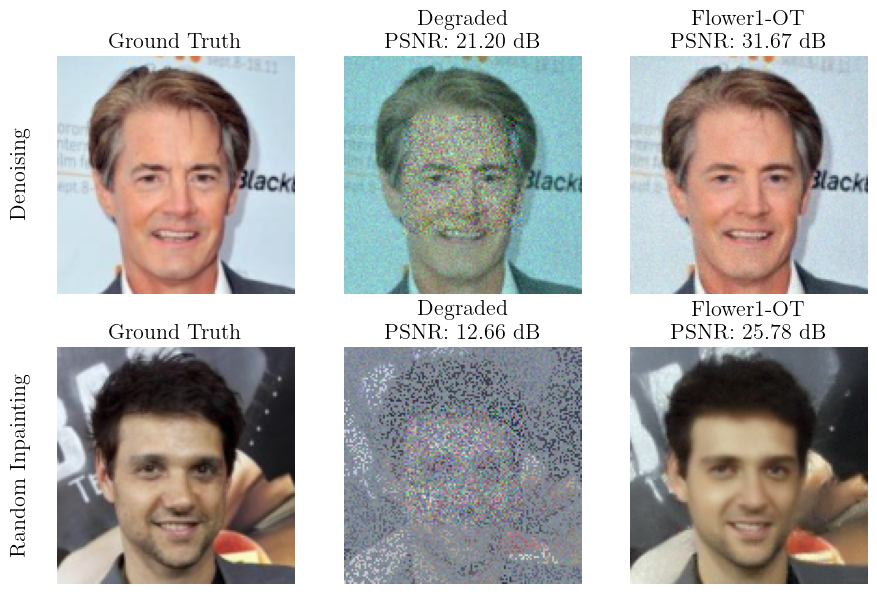}
    \caption{\fixme{Visual results for inverse problems with non-isotropic Gaussian noise.}}
    \label{fig:noniso}
\end{figure}

\newpage
\subsubsection{Hyperparameters for All Methods \label{app:hyper_all}}
In Tables \ref{tab:params_celeba} and \ref{tab:params_afhq}, we report the hyperparameters that we used for all methods. Most of the hyperparameters are adapted from \cite{martin2025pnpflow}.

\begin{table}[htb]
    \caption{Hyperparameters for all methods on the CelebA dataset. }
    \vspace{1em}
    \label{tab:params_celeba}
    \centering
    \resizebox{1\hsize}{!}{
        \begin{tabular}{llccccc}
            \toprule
            \textbf{Method} &  Hyperparameters & {Denoising} & Deblurring & Super-resolution & Random inpainting & Box inpainting \\
            \midrule
            DiffPIR & $\zeta$ (blending) & 1.0 & 1.0 & 1.0 & 1.0 & N/A \\ 
                     & $\lambda$ (regularization) & 1.0 & 1000.0 & 100.0 & 1.0 & N/A \\ 
            \midrule
            PnP-GS   & $\gamma$ (learning rate) & - & 2.0 & 2.0 & 1.0 & N/A \\ 
                     & $\alpha$ (inertia param.) & 1.0 & 0.5 & 1.0 & 0.5 & N/A \\
                     & $\sigma_f$ (factor for noise input) & 1.0 & 1.8 & 3.0 & 1.0 & N/A \\
                     & $n_\text{iter}$ (number of iter.) & 1 & 35 & 20 & 23 & N/A \\ 
            \midrule
            OT-ODE   & $t_0$ (initial time) & 0.3 & 0.4 & 0.1 & 0.1 & 0.1\\
                     & $\gamma$ & time-dependent& time-dependent & constant & constant & time-dependent\\ 
            \midrule
            Flow-Priors & $\lambda$ (regularization) & 100 & 1,000 & 10,000 & 10,000 & 10,000\\
                       & $\eta$ (learning rate) & 0.01 & 0.01 & 0.1 & 0.01 & 0.01\\ 
            \hline
            D-Flow    & $\lambda$ (regularization) & 0.001 & 0.001 & 0.001 & 0.01 & 0.001 \\ 
                     & $\alpha$ (blending) & 0.1 & 0.1 & 0.1 & 0.1 & 0.1 \\
                     & $n_\text{iter}$ (number of iter.) & 3 & 7 & 10 & 20 & 9 \\ 
            \midrule
            PnP-Flow1  & $\alpha$ (learning-rate factor) & 0.8 & 0.01 & 0.3 & 0.01 & 0.5\\
                     & $N$ (Number of time steps) & 100 & 100 & 100 & 100 & 100\\
                     & $N_{\mathrm{Avg}}$ (Number of averagings) & 1 & 1 & 1 & 1 & 1\\
            \midrule
            PnP-Flow5 & $\alpha$ (learning-rate factor) & 0.8 & 0.01 & 0.3 & 0.01 & 0.5\\
                     & $N$ (Number of time steps) & 100 & 100 & 100 & 100 & 100\\
                     & $N_{\mathrm{Avg}}$ (Number of averagings) & 5 & 5 & 5 & 5 & 5\\
            \midrule
            Flower1-OT  & $\gamma$ (refinement uncertainty) & 0 & 0 & 0 & 0 & 0\\
                     & $N$ (Number of time steps) & 100 & 100 & 100 & 100 & 100\\
                     & $N_{\mathrm{Avg}}$ (Number of averagings) & 1 & 1 & 1 & 1 & 1\\
            \midrule
            Flower5-OT & $\gamma$ (refinement uncertainty) & 0 & 0 & 0 & 0 & 0\\
                     & $N$ (Number of time steps) & 100 & 100 & 100 & 100 & 100\\
                     & $N_{\mathrm{Avg}}$ (Number of averagings) & 5 & 5 & 5 & 5 & 5\\
            \bottomrule
        \end{tabular}    
    }       
\end{table}

\begin{table}[htb]
    \caption{
    Hyperparameters for all methods on the AFHQ-Cat dataset. }
    \vspace{1em}
    \label{tab:params_afhq}
    \centering
    \resizebox{1\hsize}{!}{
        \begin{tabular}{llccccc}
            \toprule
            \textbf{Method} &  & {Denoising} & {Deblurring} & {Super-resolution} & {Random inpainting} & {Box inpainting} \\
            \midrule
            DiffPIR & $\zeta$ (blending) & 1.0 & 1.0 & 1.0 & 1.0 & N/A \\ 
                     & $\lambda$ (regularization) & 1.0 & 1000.0 & 100.0 & 1.0 & N/A \\ 
            \midrule
            PnP-GS   & $\gamma$ (learning rate) & - & 2.0 & 2.0 & 1.0& N/A \\ 
                     & $\alpha$ (inertia param.) & 1.0 & 0.3 & 1.0 & 0.5 & N/A \\
                     & $\sigma_f$ (factor for noise input) & 1.0 & 1.8 & 5.0 & 1.0 & N/A \\
                     & $n_\text{iter}$ (number of iter.) & 1 & 60 & 50 & 23 & N/A \\ 
            \midrule
            OT-ODE   & $t_0$ (initial time) & 0.3 & 0.3 & 0.1 & 0.1 & 0.1 \\
                     & $\gamma$ &time-dependent &time-dependent  & constant & constant & time-dependent\\ 
            \midrule
            Flow-Priors & $\lambda$ (regularization) & 100 & 1,000 & 10,000 & 10,000 & 10,000 \\
                        & $\eta$ (learning rate) & 0.01 & 0.01 & 0.1 & 0.01 & 0.01 \\ 
            \midrule
            D-Flow   & $\lambda$ (regularization) & 0.001 & 0.01 & 0.001 & 0.001 & 0.01 \\ 
                     & $\alpha$ (blending) & 0.1 & 0.5 & 0.1 & 0.1 & 0.1 \\
                     & $n_\text{iter}$ (number of iter.) & 3 & 20 & 20 & 20 & 9 \\ 
            \midrule
            PnP-Flow1 & $\alpha$ (learning-rate factor) & 0.8 & 0.01 & 0.01 & 0.01 & 0.5 \\
                     & $N$ (Number of time steps) & 100 & 500 & 500 & 200 & 100 \\
                     & $N_{\mathrm{Avg}}$ (Number of averagings) & 1 & 1 & 1 & 1 & 1\\
            \midrule
            PnP-Flow5 & $\alpha$ (learning-rate factor) & 0.8 & 0.01 & 0.01 & 0.01 & 0.5 \\
                     & $N$ (Number of time steps) & 100 & 500 & 500 & 200 & 100 \\
                     & $N_{\mathrm{Avg}}$ (Number of averagings) & 5 & 5 & 5 & 5 & 5\\
            \midrule
            Flower1-OT  & $\gamma$ (refinement uncertainty) & 0 & 0 & 0 & 0 & 0\\
                     & $N$ (Number of time steps) & 100 & 100 & 500 & 200 & 100\\
                     & $N_{\mathrm{Avg}}$ (Number of averagings) & 1 & 1 & 1 & 1 & 1\\
            \midrule
            Flower5-OT & $\gamma$ (refinement uncertainty) & 0 & 0 & 0 & 0 & 0\\
                     & $N$ (Number of time steps) & 100 & 100 & 500 & 200 & 100\\
                     & $N_{\mathrm{Avg}}$ (Number of averagings) & 5 & 5 & 5 & 5 & 5\\
            \bottomrule
        \end{tabular}    
    }       
\end{table}


\end{document}